\newcommand{\cmark}{\ding{51}}%
\newcommand{\xmark}{\ding{55}}%
\theoremstyle{plain}
\newtheorem{definition}{Definition}
\newtheorem{theorem}{Theorem}
\newtheorem*{theorem*}{Theorem}
\newtheorem{lemma}{Lemma}
\newtheorem{example}{Example}
\newtheorem{proposition}{Proposition}
\theoremstyle{remark}
\newtheorem{remark}{Remark}
\DeclareMathOperator*{\argmin}{argmin}
\newcommand{\notshow}[1]{{}}
\newcommand{\AutoAdjust}[3]{{ \mathchoice{ \left #1 #2  \right #3}{#1 #2 #3}{#1 #2 #3}{#1 #2 #3} }}
\newcommand{\Xcomment}[1]{{}}
\newcommand{\InParentheses}[1]{\AutoAdjust{(}{#1}{)}}
\newcommand{\InBrackets}[1]{\AutoAdjust{[}{#1}{]}}
\newcommand{\InAngles}[1]{\AutoAdjust{\langle}{#1}{\rangle}}
\newcommand{\InNorms}[1]{\AutoAdjust{\|}{#1}{\|}}
\renewcommand{\part}[2]{\frac{\partial #1}{\partial #2}}
\newcommand{\R}{\mathbbm{R}}
\newcommand{\X}{\mathcal{X}}
\newcommand{\Y}{\mathcal{Y}}
\newcommand{\half}{\frac{1}{2}}
\newcommand{\boldx}{\boldsymbol{x}}
\newcommand{\gap}{\textsc{Gap}}
\newcommand{\tgap}{\textsc{TGap}}
\newenvironment{prevproof}[2]{\noindent {\it {Proof of {#1}~\ref{#2}:}}}{$\hfill \square$}
\newcommand{\yangnote}[1]{{\color{cyan}{#1}}}
\icmltitlerunning{Doubly Optimal No-Regret Learning Algorithm for Monotone Games\hfill\thepage}
\begin{document}

\twocolumn[
\icmltitlerunning{Doubly Optimal No-Regret Learning in Monotone Games\hfill\thepage}
\icmltitle{Doubly Optimal No-Regret Learning in Monotone Games}




\begin{icmlauthorlist}
\icmlauthor{Yang Cai}{yyy}
\icmlauthor{Weiqiang Zheng}{yyy}
\end{icmlauthorlist}

\icmlaffiliation{yyy}{Department of Computer Science, Yale University, New Haven, USA}

\icmlcorrespondingauthor{Yang Cai}{yang.cai@yale.edu}
\icmlcorrespondingauthor{Weiqiang Zheng}{weiqiang.zheng@yale.edu}

\icmlkeywords{Machine Learning, ICML}

\vskip 0.3in
]



\printAffiliationsAndNotice{}  

\begin{abstract}
We consider online learning in  multi-player smooth monotone games.  Existing algorithms have limitations such as (1) being only applicable to strongly monotone games; (2) lacking the no-regret guarantee; (3) having only asymptotic or slow $\mathcal{O}(\frac{1}{\sqrt{T}})$ last-iterate convergence rate to a Nash equilibrium. While the $\mathcal{O}(\frac{1}{\sqrt{T}})$ rate is  tight for a large class of algorithms including the well-studied extragradient algorithm and optimistic gradient algorithm, it is not optimal for all gradient-based algorithms. We propose the \emph{accelerated optimistic gradient} (AOG) algorithm, the first doubly optimal no-regret learning algorithm for smooth monotone games. Namely, our algorithm achieves both (i) the optimal $\mathcal{O}(\sqrt{T})$ regret in the adversarial setting under smooth and convex loss functions and (ii) the optimal $\mathcal{O}(\frac{1}{T})$ last-iterate convergence rate to a Nash equilibrium in multi-player smooth monotone games. As a byproduct of the accelerated last-iterate convergence rate, we further show that each player suffers only an $\mathcal{O}(\log T)$ individual \emph{worst-case dynamic regret}, providing an exponential improvement over the previous state-of-the-art $\mathcal{O}(\sqrt{T})$ bound.
\end{abstract}

\section{Introduction}
We consider multi-agent online learning in games~\citep{cesa-bianchi_prediction_2006}, where the agents engaged in \emph{repeated play} of the same game. In this model, the game (i.e., the agents' payoff functions) is unknown to the agents, and they must learn to play the game through repeated interaction with the other agents. We focus on a rich family of multi-player games -- \emph{monotone games} that has been the central object of a series of recent studies in online learning and optimization~\citep{hsieh2019convergence,golowich2020tight,golowich2020tight,lin_finite-time_2020,hsieh2021adaptive,lin_doubly_2022,cai2022finite}.  Monotone games, first introduced by~\citet{rosen_existence_1965}, encompass many commonly studied games as special cases such as two-player zero-sum games, convex-concave games, $\lambda$-cocoercive games~\citep{lin_finite-time_2020}, zero-sum polymatrix games~\citep{bregman_methods_1987,daskalakis_network_2009,cai2011minmax,cai_zero-sum_2016}, and zero-sum socially-concave games~\citep{even-dar_convergence_2009}. In this paper, we investigate the following fundamental question:

\begin{align*}\label{eq:question star}
&\emph{How fast can  the players' \textbf{day-to-day behavior} converge}\\
&\quad\quad\quad\quad\emph{to a \textbf{Nash equilibrium} in monotone games}\\
& \emph{if players act according to a \textbf{no-regret} learning algorithm?} \tag{*}
\end{align*}

In this context, ``day-to-day behavior'' is used to describe the collective strategy adopted by the agents during each iteration of the repeated game. Our question lies at the heart of the area of learning in games, as illustrated in \citep{fudenberg1998theory, sorin2002first, cesa-bianchi_prediction_2006}. The aim here is to discern whether simple and intuitive learning rules or algorithms (also termed `dynamics' in game theory literature) can lead to the convergence of a joint strategy towards an equilibrium when used by agents to adapt their strategies. The presence of such a learning algorithm offers a logical explanation for the emergence of equilibrium from repetitive interactions, which may not always exhibit complete rationality.

\emph{Regret} is the central metric used in online learning to measure the performance of a learning algorithm. In the classical single-agent setting, online learning considers the following repeated interaction between a player and the environment: (i) at day $t$, the player chooses an action $x_t\in \Omega \subseteq \mathbb{R}^d$; (ii) the environment selects a loss function $f_t(\cdot)$, and the player receives the loss $f_t(x_t)$ along with some feedback (such as the loss function $f_t(\cdot)$, the gradient $\nabla f_t(x_t)$, or just the loss $f_t(x_t)$) and the process repeats. The regret is defined as the difference between the cumulative loss of the player $\sum_{t=1}^T f_t(x_t)$ and the cumulative loss of the best fixed action in hindsight $\min_{x\in \Omega} \sum_{t=1}^T f_t(x)$. A single-agent online learning algorithm is considered \emph{no-regret} if, even under an adversarially chosen sequence of loss functions, its regret at the end of round $T$ is sub-linear in $T$.
 
Arguably, a most common scenario, where the above online learning model instantiates, is multi-agent online learning in games. Namely, every player makes an online decision on their action and receives a loss that is determined based on their own action, as well as the actions chosen by the others. Online learning in repeated games is closely related to various applications in  machine learning. To illustrate, the process of training Generative Adversarial Networks (GANs) can be perceived as a zero-sum game played recurrently between two agents~\cite{arjovsky_wasserstein_2017}. Recent breakthroughs in game-solving, such as AlphaZero~\citep{silver2017mastering}, AI for Stratego~\citep{perolat2022mastering}, leverage self-play, where two agents employ the same learning algorithm to continuously compete against each other, aiming to arrive at a Nash equilibrium. 

Do these learning algorithms converge in the repeated game? A well-known result states that if each player uses a no-regret learning algorithm to adapt their action, the empirical frequency of their joint action converges to a coarse correlated equilibrium (CCE)~\citep{cesa-bianchi_prediction_2006}. However, this general convergence result has two caveats: (i) the guaranteed convergence is only the empirical frequency of the players' actions rather than the actual, day-to-day play; and (ii) the concept of CCE has limitations and may violate even the most basic rationalizability axioms~\citep{viossat_no-regret_2013}.\footnote{For instance, a CCE may put positive weight only on strictly dominated actions.} Driven by these dual shortcomings, a significant body of work, as evidenced by various studies~\citep{zhou2017countering,zhou_mirror_2017,zhou2018learning, daskalakis2019last, mokhtari2020unified, hsieh2019convergence, lei2021last,golowich_last_2020,golowich2020tight,lin_finite-time_2020,lin_doubly_2022,cai2022finite}, aims to identify specific types of games as well as no-regret learning algorithms such that the convergence can be strengthened in two principal ways:(a) attaining convergence to the more compelling solution concept of Nash equilibrium, and (b) assuring convergence in the players' day-to-day behaviors, rather than merely in their empirical frequency of actions. In other words, the goal is to pinpoint specific games and devise no-regret learning algorithms so that the players’ action profile converges to a Nash equilibrium in the \emph{last-iterate}.

Monotone games emerge as the most general class of games where such strengthened convergence result is known.\footnote{For the more general family of variationally stable games, only asymptotic convergence to Nash equilibria is known.} Unlike in the general convergence to CCE that holds for any no-regret learning algorithms, the last-iterate convergence to Nash equilibria is more subtle and demands a careful design of the learning algorithm. For example, as demonstrated by~\citet{mertikopoulos_cycles_2018}, the well-known family of no-regret learning algorithms -- \emph{follow-the-regularized-leader} fails to converge even in two-player zero-sum games (a special case of monotone games), as the action profile of the players may cycle in space perpetually. The key to correct such cycling behavior is to introduce \emph{optimism} in the algorithm. Indeed, the optimistic gradient (\ref{OG}) algorithm by \citet{popov_modification_1980}, a optimistic variant of the gradient descent algorithm, has recently been shown to exhibit an $\mathcal{O}(\frac{1}{\sqrt{T}})$ last-iterate convergence rate to a Nash equilibrium in monotone games~\citep{golowich2020tight,cai2022finite,golowich2020last}. As shown by \citet{golowich2020tight}, this rate is tight for \ref{OG}. However, it is not clear if $\mathcal{O}(\frac{1}{\sqrt{T}})$ is the optimal rate achievable by a no-regret algorithm.

\subsection{Our Contributions} We consider multi-agent online learning in monotone games with \emph{gradient feedback}. More concretely, each player $i$ at day $t$ not only observes their loss $\ell^i(x^i_t,x^{-i}_t)$ but also receives the gradient $\nabla_{x^i_t} \ell^i(x^i_t,x^{-i}_t)$.

\paragraph{Main Contribution}
We answer question \eqref{eq:question star} by presenting a new single-agent online learning algorithm -- the \emph{Accelerated Optimistic Gradient}~(\ref{AOG}) that is \emph{doubly optimal} (\Cref{thm:adaptive-optimal}). More specifically, 
\begin{description}
    \item [Optimal regret:] \ref{AOG} achieves the optimal $\mathcal{O}(\sqrt{T})$-regret in the adversarial environment;
    \item[Optimal last-iterate convergence rate:] If all players use \ref{AOG} to determine their actions in a monotone game, the action profile has the optimal $\mathcal{O}(\frac{1}{T})$ last-iterate convergence rate to a Nash equilibrium.
\end{description}

Note that $\mathcal{O}(\frac{1}{T})$ is the fastest rate possible for solving monotone games using any gradient-based methods~\citep{ouyang2021lower,yoon2021accelerated}.\footnote{These lower bounds apply to general first-order methods that produce their iterates in an arbitrary manner based on past gradient information.} Since the players only receive gradient feedback in our setting, this lower bound also applies to our problem.

\paragraph{Step-size adaptation.} We provide an implementation of \ref{AOG} (\Cref{alg}) that can automatically adapt to the environment and achieves a best-of-both-world guarantee. When deploy in an adversarial setting, \Cref{alg} obtains at most $\mathcal{O}(\sqrt{T})$-regret; when deploy in a monotone game where other players also play according to \Cref{alg}, the action profile converges to a Nash equilibrium at a $\mathcal{O}(\frac{1}{T})$ rate in the last-iterate. Importantly, the adaptation does not require any communication between the players and only uses the the player's local information. We believe such guarantee is crucial as even in a game setting, other players may not follow the same algorithm and might act arbitrarily, in which case, our algorithm still provides a guarantee on the worst-case regret.

\paragraph{Dynamic regret.} As an interesting byproduct of our last-iterate convergence rate, we further show that each player suffers only an $\mathcal{O}(\log T)$ individual \emph{dynamic regret}, when all players play according to \Cref{alg} (\Cref{thm:dynamic-regret}). The dynamic regret of an algorithm is defined as the difference between the algorithm's cumulative loss and the cumulative loss of the best action every day. The dynamic regret is notoriously difficulty to minimize, and it is well-known that a linear dynamic regret is unavoidable in the adversarial setting. In the game setting, results on dynamic regret are also sparse. To the best of our knowledge, the only sub-linear dynamic regret bound we are aware of is the $\mathcal{O}(\sqrt{T})$ dynamic regret of \ref{OG} for monotone games. Our accelerated algorithm obtains an exponential improvement on the dynamic regret. See \Cref{tab:my-table} for comparison with other well-studied learning algorithms in monotone games.

\begin{table}[H]
\centering
\begin{tabular}{cccc}
\hline
\multirow{2}{*}{Algorithm} & Adversarial Setting & \multicolumn{2}{c}{Monotone Games}                      \\ \cline{2-4} 
                           & No-Regret?    & Rate$^{*}$                          & D-Regret$^{**}$         \\ \hline
GD                         & \cmark        & \xmark                            & $\Omega(T)$             \\ \hline
EG                         & \xmark        & $\mathcal{O}(\frac{1}{\sqrt{T}})$ & $\mathcal{O}(\sqrt{T})$ \\ \hline
OG                         & \cmark        & $\mathcal{O}(\frac{1}{\sqrt{T}})$ & $\mathcal{O}(\sqrt{T})$ \\ \hline
EAG                        & \xmark        & $\mathcal{O}(\frac{1}{T})$        & $\mathcal{O}(\log T)$   \\ \hline
{\bf This paper}           & \cmark        & $\mathcal{O}(\frac{1}{T})$        & $\mathcal{O}(\log T)$   \\ \hline
\end{tabular}
\caption{Existing results on learning in monotone games. (*) last-iterate convergence rate with respect to the gap function. (**) individual worst-case dynamic regret in monotone games.}
\label{tab:my-table}
\end{table}

\paragraph{Technique.} The key of our new algorithm is combining \emph{optimism} with \emph{Halpern iteration}~\citep{halpern1967fixed}, a mechanism used in optimization to design accelerated methods. In our setting, Halpern iteration can be viewed as adding a diminishing strongly convex loss to the player's loss function. The schedule used to decrease the added loss must be crafted carefully. If the added loss diminishes too slowly, the adversarial regret would be sub-optimal; if the added loss decreases too quickly, the algorithm may converge at a slower rate. The Halpern iteration provides a schedule that strikes the right balance and allows us to obtain the doubly optimal algorithm.

\section{Preliminaries}
\paragraph{Basic Notation.} We consider Euclidean space $(\R^n, \InNorms{\cdot})$ where  $\InNorms{\cdot}$ is $\ell_2$-norm. We say a set $\X \subseteq \R^n$ is bounded by $D  >0$ if $\InNorms{x-x'} \le D$ for any $x,x'\in \X$. Given a closed and convex set $\X \subseteq \R^n$, the Euclidean projection operator is $\Pi_\X : \R^n \rightarrow \X$ such that $\Pi_\X [x] = \argmin_{x' \in \X} \InNorms{x-x'}$. For closed and convex set $\X$, Euclidean projection is \emph{non-expansive}, i.e., $\InNorms{ \Pi_\X[x] - \Pi_\X[x']}\le \InNorms{x-x'}$.  For a closed convex set $\X$, the normal cone of $x\in \X$ is defined as $N_\X(x):=\{v: \InAngles{v,x'-x} \le 0\}$. We make use of the following properties of the normal cone: (i) for any $v \in N_\X(x)$, $x = \Pi_\X[x+v]$; (ii) if $x = \Pi_\X[x']$, then $x'-x\in N_\X(x)$.

\subsection{Monotone Games and Nash Equilibria}
A (continuous)  multi-player game is denoted as $\mathcal{G} = ([N], (\X^i)_{i\in [N]}, (\ell^i)_{i \in [N]})$ where $[N] = \{1,2,\ldots,N\}$ denotes the set of players. Each player $i$ chooses action from a compact and convex set $\mathcal{X}^i \in \R^{n_i}$ and we write $\X = \prod_{i=1}^N \X^i \in \R^n$ where $n = n_1 + \cdots + n_N$. We always use $x^{-i}$ to denote the actions of all players except player $i$ and write $\boldx = (x^i, x^{-i}) = (x^1, x^2 \ldots, x^N)$ as players'  \emph{action profile} or \emph{strategy profile}. Note that we reserve the bold $\boldx$ to denote the players' action profile and use the normal $x$ to denote a single player's action. Each player $i$ wishes to minimize a loss function $\ell^i(x^i, x^{-i}) : \X \rightarrow \R$ which is continuous in $\boldx$ and convex in $x^i$. In this paper, we study learning in multi-player games with gradient feedback where after playing action profile $\boldx$, each player $i$ receives $V^i(\boldx) := \nabla_{x^i} \ell^i(x^i,x^{-i})$. We define the gradient operator $V:\X \rightarrow \R^n$ to be $ V(\cdot)= (V^1(\cdot) \cdots, V^N(\cdot))$. 
The widely used solution concept for a game is \emph{Nash equilibrium}, an action profile where no player gains from unilateral deviation. Formally, 
a Nash equilibrium of a game $\mathcal{G}$ is an action profile $\boldx_\star \in \X$ such that for each player $i$, it holds that $\ell^i(\boldx_\star) \le \ell^i(x^i, x_\star^{-i})$ for any $x^i \in \X^i$.

In this paper, we study \emph{smooth monotone} games where the gradient operator $V$ is \emph{$L$-Lipschitz} for $L > 0$:
\[\InNorms{V(\boldx)-V(\boldx')} \le L \cdot \InNorms{\boldx-\boldx'}, \forall \boldx,\boldx'\in\X,\]
and \emph{monotone}~\citep{rosen_existence_1965} :
\[
\InAngles{V(\boldx)-V(\boldx'), \boldx-\boldx'} \ge 0, \forall \boldx,\boldx' \in \X.
\]
It is not hard to see that for smooth monotone games, a Nash equilibrium always exists. If $\boldx_\star$ is a Nash equilibrium, then a simple characterization of $\boldx_\star$ is that, for any $\boldx \in \X$, it holds that $\InAngles{V(\boldx_\star),  \boldx_\star - \boldx}  \le 0.$

Monotone games include many well-studied games, e.g., two-player zero-sum games, convex-concave games, $\lambda$-cocoercive games~\citep{lin_finite-time_2020}, strongly monotone games (such as Kelly auctions), zero-sum polymatrix games~\citep{bregman_methods_1987,daskalakis_network_2009,cai2011minmax,cai_zero-sum_2016}, and zero-sum socially-concave games~\citep{even-dar_convergence_2009}. 

\begin{example}[Convex-Concave Min-Max Optimization]
\label{ex:min-max}
Given a function $f(x,y): \X\times\Y \rightarrow \R$ that is convex in $x$ and concave in $y$, find a \emph{saddle point} $z=(x,y)$ such that $
f(x, y') \le f(x,y) \le f(x', y), \forall x' \in \X, y' \in \Y$.
It is not hard to see that the set of Nash equilibria of a two-player zero-sum game $\mathcal{G} = \{[2], (\X, \Y), (f,-f)\}$ corresponds to the set of saddle points of $f$. Thus convex-concave min-max optimization is a special case of monotone games.
\end{example}

For a monotone game $\mathcal{G}$ and an action profile $x$, two standard measures of proximity to Nash equilibrium are the \emph{gap} function and the \emph{total gap} function. 
\begin{definition} Let $\mathcal{G} = ([N],(\X^i)_{i \in [N]},(\ell^i)_{i\in [N]})$ be a monotone game. The gap function for $\boldx \in \X$ is $\gap(\boldx) = \max_{\boldx'\in \X} \InAngles{V(\boldx), \boldx -\boldx'}$.
The total gap function for $\boldx \in \X$ is $\tgap(\boldx) = \sum_{i=1}^N \InParentheses{ \ell^i(\boldx) - \min_{x' \in \X^i} \ell^i(x', x^{-i})}$.
Since $\ell^i$ is convex in $x^i$ for all $i \in N$, we have $\tgap(\boldx) \le \gap(\boldx)$ for all $\boldx \in \X$.
\end{definition}
A stronger measure of proximity to Nash equilibrium is the \emph{tangent residual} defined as $r^{tan}(\boldx) = \min_{c \in N_\X(\boldx)} \InNorms{V(\boldx) + c}$. The tangent residual is an upper bound for both the gap and the total gap. 
\begin{lemma}[\cite{cai2022finite}]
\label{lem:gap bounded by residual}
Let $\mathcal{G} = ([N], (\X^i)_{i\in[N]}, (\ell^i)_{i\in [N]})$ be a monotone game where $\X = \prod_{i\in [N]}\X^i$ is bounded by $D$. For any $\boldx \in \X$, we have $\tgap(\boldx) \le  \gap(\boldx) \le D \cdot r^{tan}(\boldx)$.
\end{lemma}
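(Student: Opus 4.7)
The plan is to establish the two inequalities separately, both following from fairly direct convexity / duality arguments.

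For the first inequality $\tgap(\boldx)\le \gap(\boldx)$, I would exploit convexity of $\ell^i$ in $x^i$. By the gradient inequality, for any $x'^i\in\X^i$,
\[
\ell^i(\boldx)-\ell^i(x'^i,x^{-i})\le \InAngles{V^i(\boldx),\, x^i-x'^i}.
\]
Taking the maximum over $x'^i\in\X^i$ on each side and summing over $i$ yields
\[
\tgap(\boldx)\le \sum_{i=1}^N \max_{x'^i\in\X^i}\InAngles{V^i(\boldx),\, x^i-x'^i}.
\]
Because $\X=\prod_i\X^i$ is a product set, the coordinatewise maxima coincide with the joint maximum over $\boldx'\in\X$:
\[
\sum_{i=1}^N \max_{x'^i\in\X^i}\InAngles{V^i(\boldx),x^i-x'^i}=\max_{\boldx'\in\X}\InAngles{V(\boldx),\boldx-\boldx'}=\gap(\boldx),
\]
which gives $\tgap(\boldx)\le\gap(\boldx)$.

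For the second inequality $\gap(\boldx)\le D\cdot r^{tan}(\boldx)$, I would use the defining property of the normal cone. Fix any $c\in N_\X(\boldx)$; then for every $\boldx'\in\X$, $\InAngles{c,\boldx'-\boldx}\le 0$, equivalently $\InAngles{c,\boldx-\boldx'}\ge 0$. Adding this nonnegative quantity to $\InAngles{V(\boldx),\boldx-\boldx'}$ and applying Cauchy--Schwarz together with the diameter bound $\InNorms{\boldx-\boldx'}\le D$ gives
\[
\InAngles{V(\boldx),\boldx-\boldx'}\le \InAngles{V(\boldx)+c,\boldx-\boldx'}\le D\cdot\InNorms{V(\boldx)+c}.
\]
Taking the maximum over $\boldx'\in\X$ on the left and the infimum over $c\in N_\X(\boldx)$ on the right then yields $\gap(\boldx)\le D\cdot r^{tan}(\boldx)$. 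Chaining the two bounds completes the proof.

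I do not anticipate a genuine obstacle here: both steps are one-line manipulations. The only small subtlety worth being explicit about is the decomposition of the sum of per-player maxima into a single maximum over the product set $\X$, which crucially uses that the action sets are uncoupled across players (a standard feature of the game model but worth flagging, since it would fail for joint-action constrained games).
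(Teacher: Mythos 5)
Your proof is correct. The paper does not give a proof of this lemma — it is cited from \citet{cai2022finite}, and the first inequality $\tgap(\boldx)\le\gap(\boldx)$ is noted inline in the definition as an immediate consequence of convexity — but your two steps (convexity plus decoupling of the per-player maxima over a product set for the first inequality; adding a nonnegative normal-cone term, Cauchy--Schwarz, and the diameter bound for the second) are exactly the standard argument used in that reference, and your flag about the product structure of $\X$ is an appropriate point to make explicit.
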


\subsection{Online Learning and Regret}
A central theme of online learning is to design learning algorithms that minimize the \emph{regret}. For each time $t = 1, 2, \ldots, T$,  suppose the environment generates convex loss function $f_t : \Omega \rightarrow \R$ and the algorithm chooses action $x_t \in \Omega$ where $\Omega \subseteq \R^d$ is a compact convex set. The \emph{external regret} is defined as the gap between the algorithm's realized cumulative loss and the cumulative loss of the best fixed action in hindsight: $\mathrm{Reg}(T):= \sum_{t=1}^T f_t(x_t) - \min_{x \in \Omega} \sum_{t=1}^T f_t(x)$.
By convexity of $\ell_t$, we can bound the external regret by $ \mathrm{Reg}(T) \le\max_{x\in \Omega}\sum_{t=1}^T \InAngles{\nabla f_t(x_t), x_t - x}$.
We will simply call the external regret as regret and any algorithm achieving sub-linear regret $\mathrm{Reg}(T) = o(T)$ as a \emph{no-regret} algorithm. 

A much stronger performance measure of an online algorithm is the (worst-case) \emph{dynamic regret} \citep{zinkevich2003online}:  $ \mathrm{DynamicReg}(T):= \sum_{t=1}^T f_t(x_t) - \sum_{t=1}^T \min_{x \in \Omega} f_t(x)$, where the algorithm is competing with the best action in each round. It is not hard to see that in adversarial setting, $\mathrm{DynamicReg}(T)$ must be linear in $T$.

\section{No-Regret Learning Algorithms and Games}
In this section, we first review some background of gradient-based algorithms from both the online learning and optimization. 

We start with \emph{online gradient descent} (GD)~\citep{zinkevich2003online}: the algorithm produces iterates $x_t \in \Omega$ defined by $x_{t+1} = \Pi_{\Omega} \InBrackets{x_t - \eta_t g_{t}}$ where we write $g_t := \nabla f_t(x_t)$ as the gradient of the loss function $f_t$. Online gradient descent is a no-regret algorithm in the adversarial setting. When employed by all players, however, it \emph{diverges} in last-iterate even for simple two-player zero-sum games.

\paragraph{Optimism in Online Learning} A modification of online gradient descent is the \emph{Optimistic Gradient~\eqref{OG}}~\citep{popov_modification_1980,rakhlin2013optimization,daskalakis2018training}: in each round $t$, the algorithm chooses action $x_{t+\half}$, receives $g_{t+\half} := \nabla f_t(x_{t+\half})$, and updates iterates: 
\begin{equation}
\label{OG}
\tag{OG}
    \begin{aligned}
    x_{t+\half} &= \Pi_{\Omega} \InBrackets{x_t - \eta_t g_{t-\half}}, \\
    x_{t+1} &= \Pi_{\Omega} \InBrackets{x_t - \eta_t g_{t+\half}}.
\end{aligned}
\end{equation}
Compared to online gradient descent, \ref{OG} also achieves optimal regret in the single-agent adversarial setting. Moreover, \ref{OG} converges in the last-iterate sense as optimism stabilizes the trajectory. When employed by all players in monotone games, their trajectory of play $(\boldx_{t+\half})_{t \ge 1}$ converges to a Nash equilibrium with an $\mathcal{O}(\frac{1}{\sqrt{T}})$ last-iterate convergence rate~\citep{cai2022finite}. Unfortunately, the $\mathcal{O}(\frac{1}{\sqrt{T}})$ rate is tight for \ref{OG} and more generally all p-SCLI algorithms~\citep{golowich2020tight}. New ideas are needed to further sharpen the convergence rate.

\paragraph{Acceleration in Optimization} We are inspired by a technique from optimization for accelerating first-order methods known as the \emph{Halpern iteration}~\citep{halpern1967fixed} or \emph{Anchoring}. The technique is closely related to Nesterov's accelerated method~\citep{tran2022connection} and has received extensive attention from the optimization community recently~\citep{Diakonikolas2020halpern, yoon2021accelerated, Lee2021fast,cai2022accelerated}. When the Halpern iteration is applied to the classical extragradient (EG) algorithm~\citep{korpelevich_extragradient_1976}, which belongs to the p-SCLI family and also has an $\mathcal{O}(\frac{1}{\sqrt{T}})$ last-iterate convergence rate~\citep{cai2022finite}, the resulting extra anchored gradient (EAG) algorithm achieves an $\mathcal{O}(\frac{1}{T})$ last-iterate convergence rate~\citep{yoon2021accelerated,cai2022accelerated}. \citet{cai2023accelerated} obtain a single-call algorithm -- Accelerated Reflected Gradient (ARG) that also achieves the same optimal last-iterate convergence rate. However, EAG is not suitable for multi-player games, as it could exhibit linear regret as we demonstrated in  Appendix~\ref{app:EAG}. ARG requires evaluating the gradient at points outside of the feasible domain, thus it is also incompatible with multi-player games.  Our analysis is based on a construction from~\citep{golowich2020tight}, where they show that EG has linear regret in multi-player games.

\subsection{Accelerated Optimistic Gradient}
We propose the following algorithm -- the \emph{accelerated optimistic gradient \eqref{AOG}} algorithm. The central idea is to combine \emph{optimism} with \emph{Halpern iteration}: in round $t$, the algorithm chooses action $x_{t+\half}$ and updates as follows.
\begin{equation}
\label{AOG}
\tag{AOG}
    \begin{aligned}
    x_{t+\half} &= \Pi_{\Omega} \InBrackets{ x_t - \eta_t g_{t-\half}  + \frac{1}{t+1}(x_1 -x_t)}, \\
    x_{t+1} &= \Pi_{\Omega} \InBrackets{ x_{t} - \eta_{t} g_{t+\half} + \frac{1}{t+1}(x_1 -x_t)}.
\end{aligned}
\end{equation}

\paragraph{Double Optimality.}
Our main result is that \eqref{AOG} is a doubly optimal online algorithm: with $\eta_t = \Theta(\frac{1}{\sqrt{t}})$, \eqref{AOG} achieves optimal $\mathcal{O}(\sqrt{T})$ regret in adversarial setting (\Cref{thm:regret}); when all players employ \eqref{AOG} with constant step size in a monotone game, their trajectory of play enjoys optimal $\mathcal{O}(\frac{1}{T})$ last-iterate convergence rate (\Cref{thm:last-iterate rate}).

\paragraph{Step-Size Adaptation} 
We also present an implementation of \eqref{AOG} in \Cref{alg} with a step-size adaptation procedure (Line 7-11). This procedure uses the player's own \emph{second-order gradient variation} $S_{t+1} = \sum_{s=2}^t \InNorms{g_{s+\half} - g_{s-\half}}^2$ as a proxy for the environment and adapts the step-size accordingly. The high level idea is that if all players use \Cref{alg} in a smooth monotone game, then each player's second-order gradient variation remains to be bounded by a constant that only depends on $L$ and $D$
(\Cref{thm:second-order gradient variation}), so the algorithm will keep a constant learning rate and achieve an $\mathcal{O}(\frac{1}{T})$ last-iterate convergence (\Cref{thm:last-iterate rate}); if the player's second-order gradient variation exceeds a certain constant threshold, then \Cref{alg} decreases the learning rate according to the second-order gradient variation, and by the standard argument of "regret is bounded by stability", we can essentially bound the player's regret by the the second-order gradient variation, which is at most $\mathcal{O}(\sqrt{T})$ even in the adversarial setting (\Cref{thm:regret}). 

\notshow{We face two challenges in designing a doubly optimal algorithm. In order to achieve optimal $\mathcal{O}(\sqrt{T})$ regret against possibly adversarial opponents, the step size $\eta^i_t$ must decrease at a rate $\mathcal{O}(\frac{1}{\sqrt{t}})$. Using constant step size in the adversarial setting leads to significant unstability of strategies and thus linear regret. However, any step size without a constant lower bound leads to slower than $\mathcal{O}(\frac{1}{T})$ last-iterate convergence rate. The key to double optimality is using second-order path length\footnote{Directly checking the tangent residual or the gap function is computationally more demanding. In contrast, we can use upper bound on the tangent residual to bound the second-order path length, which is easy to compute. } as a proxy: in both single-agent and multi-agent settings, if a player's second-order path length $P^i_{t+1} = \sum_{s=1}^t \InNorms{g^i_{s+\half} - g^i_{s-\half}}^2$ is bounded by a constant, then player $i$'s regret is $\Tilde{O}(1)$; in multi-agent setting, if all players use AOG, then each player's second-order path length is bounded by a constant (depends only on $L$ and $D$). The former argument is standard and known as "regret is bounded by stablity". The latter argument on constant second-order path length is non-trivial and relies on the optimal $\mathcal{O}(\frac{1}{T})$ last-iterate convergence rate. Moreover, the $\mathcal{O}(\frac{1}{T})$ last-iterate convergence rate also implies each player enjoys a $\Tilde{O}(1)$ \emph{worst-case dynamic regret}. The dynamic regret guarantee complements and strengthens existing results on $\Tilde{O}(1)$ \emph{external regret} or \emph{swap regret} for learning in games.}

\begin{algorithm}
\caption{AOG with step-size adaptation}
\begin{algorithmic}[1]\label{alg}
\STATE {\bfseries Input:}  $L, D > 0$. 
\STATE Initialize $g_{\frac{1}{2}} = \Vec{0}$, $\eta_1 = \eta = \frac{1}{3L}$, and choose an arbitrary $x_1 \in \Omega$.
\FOR{$t = 1,2,\cdots$}
\STATE $x_{t+\half} = \Pi_{\Omega} \InBrackets{ x_t - \eta_t g_{t-\half}  + \frac{1}{t+1}(x_1 -x_t)}$ 
\STATE Play $x_{t+\half}$ and receive feedback $g_{t+\half}$. 
\STATE $x_{t+1} = \Pi_{\Omega} \InBrackets{ x_{t} - \eta_{t} g_{t+\half} + \frac{1}{t+1}(x_1 -x_t)}$
\IF{$S_{t+1} := \sum_{s=2}^{t}\InNorms{g_{s+\half}-g_{s-\half}}^2 > 4500\pi D^2L^2$}
\STATE $\eta_{t+1} = \frac{1}{\sqrt{1+S_{t+1}}}$.
\ELSE  
\STATE $\eta_{t+1} = \eta_t$.
\ENDIF
\ENDFOR
\end{algorithmic}
\end{algorithm}
\begin{remark}
In the adversarial setting, $L$ and $D$ can be any positive real numbers. If all players use \Cref{alg}, $L$ should be an upper bound of the Lipschitz constant of the game, and $D$ should be an upper bound of the diameter $\InNorms{\boldx-\boldx'} \le D$ for $\boldx, \boldx' \in \X$. In other words, the players do not need to know exactly the environment that they are interacting with to carefully pick the learning rate. As long as they know an upper bound for the Lipschitz constant and the diameter of all games that they could potentially participate in, \Cref{alg} will successfully choose the appropriate learning rate for them. 
\end{remark}

\section{Worst-Case Regret in the Adversarial Environment}
\label{sec:regret}
In this section, we view \Cref{alg} as a single-agent online learning algorithm in the \emph{adversarial setting} where the loss functions $\{f_t\}_{t \in T}$ are chosen by an adversary. We show in \Cref{thm:regret} that \Cref{alg} achieves min-max optimal $\mathcal{O}(\sqrt{T})$ regret when the gradient feedback is bounded. It shows that \ref{AOG} is an optimal no-regret algorithm in the adversarial setting. Our result can also be construed in the game setting. Importantly, this interpretation does not require any assumptions regarding how other players select their actions, nor does it require the game to be monotone or smooth.
\begin{theorem}[Optimal Regret Bound]\label{thm:regret}
    Consider online learning with action set $\Omega$, convex loss functions $(f_t: \Omega \rightarrow \R)_{t \in T}$ and gradient feedback $\{g_{t+\half}:= \nabla f_t(x_{t+\half})\}_{t \in [T]}$.   Let $G = \max_t \InNorms{g_{t+\half}}^2$ and suppose the action set $\Omega$ is bounded by $D$. The regret of Algorithm~\ref{alg} is bounded by $\mathcal{O}(D^2G\sqrt{T} + G^2)$.
\end{theorem}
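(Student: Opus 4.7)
The plan is to adapt the classical optimistic-gradient regret analysis to the two new features of Algorithm~\ref{alg}: the anchoring perturbation $a_t := \frac{1}{t+1}(x_1 - x_t)$ inside the projections, and the two-phase step-size rule (constant $\eta_t = \frac{1}{3L}$ until $S_{t+1}$ crosses the threshold $4500\pi D^2L^2$, adaptive $\eta_t = \frac{1}{\sqrt{1+S_t}}$ thereafter).

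First, I would view each update as a projection of the shifted point $x_t + a_t - \eta_t g$ onto $\Omega$ and apply the first-order projection inequality for each of the two updates (the $x_{t+1}$ inequality at $x$, the $x_{t+\half}$ inequality at $x_{t+1}$). Combining these with the standard three-term OG splitting
\[
\InAngles{g_{t+\half}, x_{t+\half}-x} = \InAngles{g_{t+\half}-g_{t-\half}, x_{t+\half}-x_{t+1}} + \InAngles{g_{t-\half}, x_{t+\half}-x_{t+1}} + \InAngles{g_{t+\half}, x_{t+1}-x},
\]
Young's inequality on the first summand to cancel the $\InNorms{x_{t+\half}-x_{t+1}}^2$ contribution, and the identity $\InAngles{a_t, x-x_{t+1}} + \InAngles{a_t, x_{t+1}-x_{t+\half}} = \InAngles{a_t, x-x_{t+\half}}$, I expect to obtain the per-round inequality
\[
\eta_t \InAngles{g_{t+\half}, x_{t+\half} - x} \le \tfrac12\InNorms{x_t-x}^2 - \tfrac12\InNorms{x_{t+1}-x}^2 + \tfrac{\eta_t^2}{2}\InNorms{g_{t+\half}-g_{t-\half}}^2 - \InAngles{a_t, x-x_{t+\half}}.
\]

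Second, dividing by $\eta_t$, summing over $t \in [T]$, and applying summation by parts to the telescoping distances (legitimate because $\eta_t$ is non-increasing by the step-size rule) yields the standard adaptive bound
\[
\mathrm{Reg}(T) \le \frac{D^2}{2\eta_T} + \frac{1}{2}\sum_t \eta_t\InNorms{g_{t+\half}-g_{t-\half}}^2 + \sum_t \frac{|\InAngles{a_t, x-x_{t+\half}}|}{\eta_t}.
\]
I then bound each of the three terms according to which step-size phase we are in. If the threshold is never reached, then $\eta_t \equiv \frac{1}{3L}$, the second-order variation $S_{T+1}$ stays at most $4500\pi D^2L^2 + 4G$, and all three sums are at most $\mathcal{O}(LD^2\log T)$. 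Otherwise, once the algorithm switches to $\eta_t = \frac{1}{\sqrt{1+S_t}}$, the crude bound $S_T \le 4GT$ makes the first term $\mathcal{O}(D^2\sqrt{GT})$; the standard auto-tuning estimate $\sum_t \frac{S_{t+1}-S_t}{\sqrt{1+S_t}} \le 2\sqrt{1+S_{T+1}}$ bounds the second sum by $\mathcal{O}(\sqrt{GT})$; and the anchoring sum is at most $\sum_t \frac{D^2 \sqrt{1+4Gt}}{t+1} = \mathcal{O}(D^2\sqrt{GT}+D^2\log T)$ via $|\InAngles{a_t, x-x_{t+\half}}| \le \frac{D^2}{t+1}$ and an integral bound. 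Combining the three pieces gives $\mathrm{Reg}(T) = \mathcal{O}(D^2\sqrt{GT})$, which is dominated by the claimed $\mathcal{O}(D^2 G \sqrt{T} + G^2)$.

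The main obstacle is the anchoring bookkeeping: unlike standard OG, the shifted projections introduce $a_t$-dependent cross-terms at both half-steps, and one must verify that these collapse exactly into the single summand $-\InAngles{a_t, x-x_{t+\half}}$ so that no higher-order errors survive per round. A secondary technicality is that the step-size rule has a one-shot jump from the constant phase to the adaptive phase; one has to check monotonicity of $\eta_t$ across this boundary (which holds in the reasonable parameter regime) or else incur at most a bounded additive correction that the $G^2$ term absorbs.
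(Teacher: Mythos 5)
Your approach is essentially the same as the paper's. You reduce AOG to optimistic gradient with shifted gradients, derive a per-round inequality that isolates the second-order gradient variation and the anchoring term (the paper does this as Lemma~\ref{lem:single-step}, bounding the anchoring term by $D^2/(\eta_t(t+1))$ immediately rather than carrying $\InAngles{a_t, x - x_{t+\half}}$ symbolically, but this is cosmetic), and then run a two-case/two-phase analysis of the adaptive step-size rule with the usual auto-tuning estimates. The one place where your bookkeeping differs is that you propose a single summation-by-parts over all $T$ rounds, which requires global monotonicity of $\eta_t$; the paper avoids this subtlety by splitting the sum at the switch time $T_1$ and telescoping the constant-step phase and the adaptive phase separately (so monotonicity across the boundary never comes up). You correctly flag this, and the bounded additive correction you mention does suffice, so this is a stylistic rather than substantive difference.
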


We first establish a single-step regret inequality in \Cref{lem:single-step}. 
\begin{lemma}[Single-Step Regret Inequality]\label{lem:single-step} Suppose the action set $\Omega$ is bounded by $D$. For all $t\ge 1$ and any $x' \in \X$, the iterates of \ref{AOG} satisfies
\begin{align*}
&\InAngles{x_{t+\half} - x', g_{t+\half}} \le \frac{1}{2\eta_t}\InParentheses{\InNorms{x'-x_t}^2 - \InNorms{x' - x_{t+1}}^2}  \\
&\quad\quad +\eta_t \InNorms{g_{t+\half}-g_{t-\half}}^2 +  \frac{D^2}{\eta_t(t+1)}.
\end{align*}
\end{lemma}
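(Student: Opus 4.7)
The plan is to view the AOG update as the standard optimistic gradient update centered at an anchored point and then perform the classical three-point-identity analysis, finally paying a small ``anchoring error'' that is the source of the $D^2/(\eta_t(t+1))$ term. Set
\[
y_t \;:=\; x_t + \tfrac{1}{t+1}(x_1 - x_t) \;=\; \tfrac{1}{t+1}x_1 + \tfrac{t}{t+1}x_t,
\]
so that $y_t$ is a convex combination of $x_1,x_t \in \Omega$, and the two AOG updates read $x_{t+\half} = \Pi_\Omega[y_t - \eta_t g_{t-\half}]$ and $x_{t+1} = \Pi_\Omega[y_t - \eta_t g_{t+\half}]$. This reduces the step to a textbook optimistic step around $y_t$.

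The next step is to invoke the first-order optimality of the projection twice. For $x_{t+1}$, pairing with the comparator $x'\in\Omega$ gives
\[
\eta_t\InAngles{g_{t+\half}, x_{t+1}-x'} \le \InAngles{y_t-x_{t+1}, x_{t+1}-x'},
\]
and for $x_{t+\half}$, pairing with $x_{t+1}\in\Omega$ gives
\[
\eta_t\InAngles{g_{t-\half}, x_{t+\half}-x_{t+1}} \le \InAngles{y_t-x_{t+\half}, x_{t+\half}-x_{t+1}}.
\]
I will apply the three-point identity $2\InAngles{a-b,b-c} = \InNorms{a-c}^2-\InNorms{a-b}^2-\InNorms{b-c}^2$ to turn each inner product on the right-hand side into a telescoping difference of squared distances to $y_t$. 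Splitting
\[
\InAngles{g_{t+\half}, x_{t+\half}-x'} = \InAngles{g_{t+\half}, x_{t+\half}-x_{t+1}} + \InAngles{g_{t+\half}, x_{t+1}-x'},
\]
writing $g_{t+\half}=(g_{t+\half}-g_{t-\half})+g_{t-\half}$ in the first term, and applying Young's inequality
$\InAngles{g_{t+\half}-g_{t-\half}, x_{t+\half}-x_{t+1}} \le \eta_t\InNorms{g_{t+\half}-g_{t-\half}}^2 + \tfrac{1}{4\eta_t}\InNorms{x_{t+\half}-x_{t+1}}^2$
will combine everything. The critical observation is that the two three-point-identity expansions produce a cancellation of the $\InNorms{y_t-x_{t+1}}^2$ terms and a leftover $-\tfrac{1}{2\eta_t}\InNorms{x_{t+\half}-x_{t+1}}^2$, which absorbs the Young stability term and leaves the clean inequality
\[
\InAngles{g_{t+\half}, x_{t+\half}-x'} \;\le\; \tfrac{1}{2\eta_t}\InParentheses{\InNorms{y_t-x'}^2 - \InNorms{x_{t+1}-x'}^2} + \eta_t\InNorms{g_{t+\half}-g_{t-\half}}^2.
\]

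The last step, which is the main subtlety introduced by anchoring, is replacing $\InNorms{y_t-x'}^2$ by $\InNorms{x_t-x'}^2$ up to a $D^2/(t+1)$ error. Since $y_t$ is a convex combination, the convexity of $\InNorms{\cdot}^2$ gives
\[
\InNorms{y_t-x'}^2 \le \tfrac{1}{t+1}\InNorms{x_1-x'}^2 + \tfrac{t}{t+1}\InNorms{x_t-x'}^2 \le \InNorms{x_t-x'}^2 + \tfrac{D^2}{t+1},
\]
using $\InNorms{x_1-x'}\le D$. Plugging this into the previous display and dividing the anchoring correction by $2\eta_t$ yields the claimed single-step bound (with room to spare in the constant on the $D^2/(\eta_t(t+1))$ term). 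The only genuinely non-routine piece is recognizing that the Halpern anchor can be hidden inside the center $y_t$ so that the standard optimistic/mirror-prox cancellation goes through unchanged; once that is done, the rest is the usual three-point identity plus Young's inequality, and the anchoring correction is controlled purely by the diameter bound.
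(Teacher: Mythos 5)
Your proof is correct, and it reaches the same bound (with a factor of $2$ to spare on the anchoring term). The route is closely related to but organized differently from the paper's. The paper views each AOG step as an ordinary OG step around the center $x_t$ with \emph{modified gradients} $g_{t\pm\half} - \tfrac{1}{\eta_t(t+1)}(x_1-x_t)$, invokes the Rakhlin--Sridharan single-step OG lemma as a black box (the anchoring correction cancels in the difference $g'_{t+\half}-g'_{t-\half}$, so the stability term is unaffected), and then moves the anchoring term $\InAngles{\tfrac{1}{\eta_t(t+1)}(x_1-x_t),\,x_{t+\half}-x'}$ to the right-hand side, bounding it by $\tfrac{D^2}{\eta_t(t+1)}$ via Cauchy--Schwarz. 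You instead fold the Halpern anchor into the prox \emph{center} $y_t = \tfrac{1}{t+1}x_1 + \tfrac{t}{t+1}x_t$, keep the gradients unmodified, redo the three-point-identity analysis from scratch around $y_t$, and then trade $\InNorms{y_t-x'}^2$ for $\InNorms{x_t-x'}^2 + \tfrac{D^2}{t+1}$ using the convexity of $\InNorms{\cdot-x'}^2$ and the diameter bound. Both are exercises in the same mirror-prox machinery; yours is self-contained (no external lemma cited), gives the slightly sharper constant $\tfrac{D^2}{2\eta_t(t+1)}$, and has a mild aesthetic advantage that the anchor sits where it conceptually belongs --- as a shift of the proximal center --- rather than being artificially grafted onto the gradients and then peeled back off. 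Either version is acceptable.
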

The main idea behind \Cref{lem:single-step} is to view the update rule of \ref{AOG} as a standard update rule of \ref{OG} with modified gradients $g_{t-\half} - \frac{1}{\eta_t (t+1)}(x_1-x_t)$ and $g_{t+\half} - \frac{1}{\eta_t (t+1)}(x_1-x_t)$, which allows us to apply the classical analysis of \ref{OG}~\citep{rakhlin2013optimization}. Equipped with Lemma~\ref{lem:single-step}, we can bound the regret of \Cref{alg} even with adaptive size. We defer the proofs of \Cref{lem:single-step} and \Cref{thm:regret} to Appendix~\ref{app:single-step}.

\section{Last-Iterate Convergence Rate to a Nash Equilibrium in Monotone Games}\label{sec:last-iterate rate}
 In this section, we consider a multi-player learning setting where each player follows \ref{AOG} with constant step size in smooth monotone games: each player $i$ plays $x^i_{t+\half}$, receives gradient $V^i(\boldx_{t+\half})$, and updates
\begin{align*}
     x^i_{t+\half} &= \Pi_{\X^i} \InBrackets{ x^i_t - \eta V^i(\boldx_{t-\half})  + \frac{1}{t+1}(x^i_1 -x^i_t)}, \\
    x^i_{t+1} &= \Pi_{\X^i} \InBrackets{ x^i_{t} - \eta V^i(\boldx_{t+\half}) + \frac{1}{t+1}(x^i_1 -x^i_t)}.
\end{align*}

We show in \Cref{thm:last-iterate rate} that the trajectory of the action profile $(\boldx_{t+\half})_{t\in[T]}$ converges to Nash equilibrium in last-iterate with an $\mathcal{O}(\frac{1}{T})$ rate. Our convergence rate result matches the $\Omega(\frac{1}{T})$ lower bound by~\cite{yoon2021accelerated} and thus establishes that \ref{AOG} is doubly optimal. 

\begin{theorem}[Optimal Last-Iterate Convergence Rate]
\label{thm:last-iterate rate}
Let $\mathcal{G}=\{N,(\X^i)_{i \in [N]},(\ell^i)_{i\in[N]}\}$ be a $L$-smooth monotone game, where the diameter of $\X=\prod_{i\in [N]} \X^i$ is bounded by $D$. When all players employ \ref{AOG} with a constant step size $\eta \le \frac{1}{\sqrt{6}L}$ in $\mathcal{G}$, then for any $T \ge 2$, we have
\begin{itemize}
    \item $r^{tan}(\boldx_{T+\half}) \le \frac{55D}{\eta T}$;
    \item $\tgap(\boldx_{T+\half}) \le \gap(\boldx_{T+\half}) \le \frac{55D^2}{\eta T}$.
\end{itemize}
\end{theorem}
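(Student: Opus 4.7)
The plan is to first reduce both bounds to a single estimate on the tangent residual, then build a discrete-time Lyapunov argument tailored to the combination of optimism and Halpern-style anchoring. The second inequality is immediate from the first by \Cref{lem:gap bounded by residual}, since $\X$ has diameter $D$ gives $\gap(\boldx_{T+\half}) \le D \cdot r^{tan}(\boldx_{T+\half})$ and $\tgap \le \gap$. So the whole task is to show $r^{tan}(\boldx_{T+\half}) \le 55D/(\eta T)$. Since $r^{tan}$ is a minimum over $N_\X(\boldx_{T+\half})$, it suffices to exhibit one convenient dual variable: from the projection defining $x^i_{t+\half}$ one reads off
\[F_{t+\half} \;:=\; \tfrac{1}{\eta}(\boldx_t-\boldx_{t+\half}) \;-\; V(\boldx_{t-\half}) \;+\; \tfrac{1}{\eta(t+1)}(\boldx_1-\boldx_t) \;\in\; N_\X(\boldx_{t+\half}),\]
so by the triangle inequality and $L$-smoothness of $V$,
\[\|V(\boldx_{t+\half})+F_{t+\half}\| \;\le\; L\|\boldx_{t+\half}-\boldx_{t-\half}\| + \tfrac{1}{\eta}\|\boldx_t-\boldx_{t+\half}\| + \tfrac{D}{\eta(t+1)}.\]
Hence the task reduces to controlling the two consecutive-iterate differences at rate $O(1/t)$.

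The central step is to design a potential function in the spirit of the Halpern/anchoring analyses of \citet{yoon2021accelerated, Diakonikolas2020halpern, cai2022accelerated}, but adapted to handle (i) the half-step structure inherent in optimism and (ii) the normal-cone terms from the two projections per round. A natural ansatz is
\[\Phi_t \;:=\; A_t\,\|V(\boldx_{t+\half})+F_{t+\half}\|^2 \;+\; B_t\,\langle V(\boldx_{t+\half})+F_{t+\half},\,\boldx_{t+\half}-\boldx_1\rangle \;+\; C_t\,\|\boldx_{t+\half}-\boldx_{t-\half}\|^2,\]
with weights $A_t = \Theta(t^2)$, $B_t = \Theta(t)$, and a carefully chosen $C_t$. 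The quadratic weight $A_t \sim t^2$ is what ultimately produces the $1/T$ rate; the cross term involving $\boldx_1$ is the hallmark of Halpern acceleration and converts the anchor pull into monotonicity of $\Phi_t$; the last term absorbs the lag between the two half-iterates that optimism introduces.

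The heart of the argument is proving $\Phi_{t+1} \le \Phi_t$. I would expand the difference using four ingredients: the monotonicity inequality $\langle V(\boldx_{t+\half})-V(\boldx_{(t+1)+\half}),\,\boldx_{t+\half}-\boldx_{(t+1)+\half}\rangle \ge 0$; the normal-cone inequalities $\langle F_{(t+1)+\half},\,\boldx_{t+\half}-\boldx_{(t+1)+\half}\rangle \le 0$ and $\langle F_{t+\half},\,\boldx_1-\boldx_{t+\half}\rangle \le 0$; $L$-smoothness to turn $\|V(\boldx_{t+\half})-V(\boldx_{t-\half})\|^2$ into $L^2\|\boldx_{t+\half}-\boldx_{t-\half}\|^2$; and projection non-expansiveness to relate consecutive displacements. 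The step-size condition $\eta \le 1/(\sqrt{6}L)$ is then used to ensure that the several $L^2\eta^2$-factors produced when completing squares each sit below a small constant (three sixths), so the cross terms get absorbed into the squared residual and squared displacement terms with a non-negative remainder. Once $\Phi_{t+1}\le\Phi_t$ is in place, iterating yields $\Phi_T \le \Phi_1$, and with $\Phi_1 \lesssim D^2/\eta^2$ (using $\|\boldx_1-\boldx_{3/2}\|\le D$ and $\|F_{3/2}\|\le O(1/\eta)$) we conclude $\|V(\boldx_{T+\half})+F_{T+\half}\| = O(D/(\eta T))$; the explicit constant $55$ falls out of bookkeeping.

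The main obstacle will be pinning down the exact weights $A_t, B_t, C_t$ and supplementary correction terms so that the telescoping is genuinely non-increasing rather than merely non-increasing up to a summable error. Optimism produces two gradient evaluations per round and two projections, each contributing a normal-cone element, and the Halpern anchor appears in both updates; coordinating all the cross terms produced by monotonicity with the $B_t$-term, while preserving the $(t+1)^2$-scaling in $A_t$, is the delicate combinatorial core of the argument. The tightness of the step-size threshold $\eta\le 1/(\sqrt 6 L)$ suggests this balance is essentially forced.
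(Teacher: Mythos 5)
Your high-level strategy---a Halpern-style Lyapunov function with $\Theta(t^2)$ quadratic weight on the residual, a $\Theta(t)$-weighted cross term against the anchor $\boldx_1$, and a companion term absorbing the optimistic lag---is the right one, and your reduction of both claims to bounding $\|V(\boldx_{t+\half})+F_{t+\half}\|$ via the normal-cone element from the first projection is correct. But there is a genuine gap in the central claim that $\Phi_{t+1}\le\Phi_t$ holds exactly. In the paper's own argument this is false: the analogous potential $P_t$, which is centered on the \emph{main} iterates $\boldx_t$ with $c_t\in N_\X(\boldx_t)$ from the \emph{second} projection, is only approximately non-increasing, $P_{t+1}\le P_t+\frac{3q}{2(1-4q)}\|\eta V(\boldx_{t+1})+\eta c_{t+1}\|^2$ (Lemma~\ref{lem:potential}), and converting this approximate monotonicity into the $O(1/t^2)$ bound on the squared residual is itself a nontrivial step handled by a separate sequence lemma (Proposition~\ref{prop:sequence analysis}). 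Asserting exact monotonicity therefore skips both the derivation of the slack term and the machinery needed to control its accumulated sum, and you offer no evidence that your alternative potential, centered instead on the played iterates $\boldx_{t+\half}$ with the normal-cone element $F_{t+\half}$ from the \emph{first} projection, enjoys a cleaner descent.

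Two further issues. First, the ``descent'' step you wave at (``expand the difference using four ingredients'') is actually the crux: the paper proves it via a delicate exact algebraic identity (Proposition~\ref{prop:identity}) that combines monotonicity, four separate normal-cone inequalities, and smoothness with precise $t(t+1)$-dependent weights, and the identity is so intricate it is verified by computer algebra. Nothing in your sketch shows how the same bookkeeping would close for your $\Phi_t$; switching the center from $\boldx_t$ to $\boldx_{t+\half}$ changes which normal-cone inequalities are available and which gradient differences appear, so the identity would have to be rederived from scratch. Second, the paper does not try to prove the rate directly on $\boldx_{t+\half}$: it first establishes $\|V(\boldx_t)+c_t\|=O(D/(\eta t))$ and $\|V(\boldx_t)-V(\boldx_{t-\half})\|=O(D/(\eta t))$ (Lemma~\ref{lem:ls-tangent}), then separately bounds $\|\boldx_{t+\half}-\boldx_t\|=O(D/t)$ (Lemma~\ref{lem:residual}), and only then transfers to $r^{tan}(\boldx_{t+\half})$ by the triangle inequality. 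Your proposal collapses these stages, but you would still need an analogue of Lemma~\ref{lem:ls-tangent}'s lower-bound-on-$P_t$ step (the paper invokes a Nash equilibrium $\boldx_\star$ and monotonicity to make the $t\langle\cdot,\boldx_t-\boldx_\star\rangle$ term nonnegative) before $\Phi_T\le\Phi_1$ yields any quantitative control on the residual; your sketch does not address this and in particular never uses monotonicity against a solution, only between consecutive iterates.
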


\begin{remark}
    In the same setup of \Cref{thm:last-iterate rate}, when the action set $\X$ is \emph{unbounded} (e.g., $\X = \R^{n}$), \ref{AOG} still enjoys last-iterate convergence with respect to the tangent residual. Let $\boldx_\star$ be any Nash equilibrium of the game. For any $T \ge 2$, we have 
    $ r^{tan}(\boldx_{T+\half}) \le \frac{1430 H}{\eta T}$,
    where $H = \max\{\InNorms{\boldx_1 - \boldx_{\star}},  r^{tan}(\boldx_1)\}$ is a constant that only depends on the choice of the initial point $\boldx_1$. We defer the proof to \Cref{app:unbounded domain}.
\end{remark}

\paragraph{A Sketch of the Proof.}  First, recall that the tangent residual provides upper bounds for both the gap function and the total gap function due to \cref{lem:gap bounded by residual}, so it suffices to prove a last-iterate convergence rate with respect to the tangent residual. For $\boldx \in \X$, its tangent residual is defined as $r^{tan}(\boldx) = \min_{c \in N_\X(\boldx)} \InNorms{V(\boldx) + c}$. The definition itself contains an optimization problem, thus is not explicit and difficult to directly work with. We relax the tangent residual by choosing an explicit $c \in N_\X(\boldx)$ as follows: for each player $i \in [N]$ and iteration $t \ge 2$, we define  \[c^i_t = \frac{x^i_{t-1} - \eta V^i(\boldx_{t-\half}) + \frac{1}{t}(x^i_1 -x^i_{t-1})- x^i_{t}}{\eta}.\] According to the update rule of \ref{AOG},  $c^i_t \in N_{\X^i}(x^i_t)$.  Define $c_t = (c_t^1, c_t^2, \cdots, c_t^N)$ and we have $c_t \in N_{\X}(\boldx_t)$.
Thus $r^{tan}(\boldx_t) = \min_{c \in N_\X(\boldx_t)} \InNorms{V(\boldx_t) + c} \le \InNorms{V(\boldx_t) + c_t}$. 

Using $\InNorms{V(\boldx_t) + c_t}$ as a proxy of the tangent residual $r^{tan}(\boldx_t)$, we construct a potential function of $P_t$ in the order of $\Theta(t^2 \cdot \InNorms{V(\boldx_t) + c_t}^2)$. Although the potential function might increase between consecutive iterates, we manage prove that in \cref{lem:potential} that the increment is sufficiently small: 
$P_{t+1} \le P_t + \mathcal{O}(\InNorms{V(\boldx_{t+1}) + c_{t+1}}^2)$ for any $t\ge 2$. Using the \emph{approximate monotonicity} of $P_t$, we derive the following inequality for the sequence  $(\InNorms{V(\boldx_t)+c_t}^2)_{t\ge 2}$
\begin{align*}
    \Theta(t^2 \cdot \InNorms{V(\boldx_t) + c_t}^2) \le \mathcal{O}(1) + \mathcal{O}(\sum_{s=2}^{t-1} \InNorms{V(\boldx_s) + c_s}^2).
\end{align*}
Based on the above inequality, we show in \cref{lem:ls-tangent} that $\InNorms{V(\boldx_t) + c_t}^2 = \mathcal{O}(\frac{1}{t^2})$ for any $t\ge 2$,  which implies $\mathcal{O}(\frac{1}{T})$ last-iterate convergence rate for $\boldx_t$. The final step is to relate the convergence on $\boldx_t$ to the convergence of the action profile $\boldx_{t+\half}$.

\subsection{Proof of \cref{thm:last-iterate rate}}
Some of the proofs are postponed to Appendix~\ref{app:last-iterate}. We also defer some auxiliary propositions to Appendix~\ref{app:auxiliary}.

\paragraph{Potential Function} 
We first formally define our potential function $P_t$: for $t \ge 2$, let $P_t$ be
\begin{align*}
    &\frac{t(t+1)}{2}\InParentheses{ \InNorms{\eta V(\boldx_t)+\eta c_t}^2 + \InNorms{\eta V(\boldx_t) - \eta V(\boldx_{t-\half})}^2} \\
    & \quad + t \InAngles{\eta V(\boldx_t) + \eta c_t, \boldx_t - \boldx_1}.
\end{align*}

We first provide an upper bound on $P_2$.
\begin{proposition}
\label{prop:P2}
In the same setup of \cref{thm:last-iterate rate}, $P_2 \le 9D^2$.
\end{proposition}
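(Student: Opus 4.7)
The plan is to reduce everything to quantities involving only $\boldx_1$ and $\boldx_2$ by exploiting the initialization $g_{1/2}=\vec{0}$ in Algorithm~\ref{alg}. Since $\boldx_1\in\X$, the $t=1$ update gives $\boldx_{3/2}=\Pi_{\X}[\boldx_1-\eta\vec{0}+\tfrac{1}{2}(\boldx_1-\boldx_1)]=\boldx_1$. So the potential $P_2$, which a priori involves the three distinct iterates $\boldx_1,\boldx_{3/2},\boldx_2$, really only involves $\boldx_1$ and $\boldx_2$. Substituting $t=2$ into the definition, I would write
\[
P_2=3\|\eta V(\boldx_2)+\eta c_2\|^2+3\eta^2\|V(\boldx_2)-V(\boldx_1)\|^2+2\langle\eta V(\boldx_2)+\eta c_2,\,\boldx_2-\boldx_1\rangle.
\]

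Next I would unpack $\eta c_2$ using its definition: at $t=2$ the expression collapses to $\eta c_2=\boldx_1-\eta V(\boldx_{3/2})-\boldx_2=\boldx_1-\eta V(\boldx_1)-\boldx_2$ (the $\frac{1}{2}(\boldx_1-\boldx_1)$ term vanishes). This yields the clean identity
\[
\eta V(\boldx_2)+\eta c_2=(\boldx_1-\boldx_2)+\eta\bigl(V(\boldx_2)-V(\boldx_1)\bigr),
\]
which is the key algebraic simplification of the proof.

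From here everything is bookkeeping with the triangle inequality, $L$-Lipschitzness of $V$, the diameter bound $\|\boldx_2-\boldx_1\|\le D$, and the step-size restriction $\eta L\le 1/\sqrt{6}$. Specifically: the first term satisfies $\|\eta V(\boldx_2)+\eta c_2\|\le(1+\eta L)\|\boldx_2-\boldx_1\|$, so its squared version is at most $(1+1/\sqrt{6})^2 D^2<2D^2$; the second term is at most $\eta^2L^2D^2\le D^2/6$; and expanding the inner product as $-\|\boldx_2-\boldx_1\|^2+\eta\langle V(\boldx_2)-V(\boldx_1),\boldx_2-\boldx_1\rangle\le(-1+\eta L)\|\boldx_2-\boldx_1\|^2$, which is non-positive since $\eta L\le 1/\sqrt{6}<1$. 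Summing the three contributions yields $P_2\le 3\cdot 2D^2+3\cdot D^2/6+0<9D^2$.

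The only part that might be called an obstacle is noticing the sign of the third contribution: if one bounds the inner product crudely by Cauchy--Schwarz one slightly overshoots the target constant $9$, so it is important to expand it exactly and use the smallness of $\eta L$ to absorb it into the leading $-\|\boldx_2-\boldx_1\|^2$. Everything else is immediate from the initialization trick $\boldx_{3/2}=\boldx_1$ and the definition of $c_2$ as a projection residual (which also certifies $c_2\in N_{\X}(\boldx_2)$ via the stated normal-cone property, consistent with the hypothesis used elsewhere in the argument).
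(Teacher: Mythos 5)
Your proof is correct and shares the same opening move as the paper (observe $\boldx_{3/2}=\boldx_1$ from the initialization $g_{1/2}=\vec{0}$, then use $\eta c_2 = \boldx_1 - \eta V(\boldx_1) - \boldx_2$ to express $\eta V(\boldx_2)+\eta c_2$ purely in terms of $\boldx_1,\boldx_2$). The real difference is in how you treat the cross term $2\langle \eta V(\boldx_2)+\eta c_2, \boldx_2-\boldx_1\rangle$: you expand it exactly and observe that the leading $-\|\boldx_2-\boldx_1\|^2$ makes the whole contribution nonpositive (since $\eta L<1$), landing at $P_2\le 6.5D^2$. The paper instead applies Cauchy--Schwarz, bounding this term by $2\cdot\frac{3D}{2}\cdot D = 3D^2$, and then under the relaxation $\eta L\le 1/2$ used throughout that proof, the three pieces add to $3(9D^2/4+D^2/4)+3D^2 = 10.5D^2$, which is larger than the claimed $9D^2$ (the paper's displayed line ``$=\frac{33D^2}{4}$'' is an arithmetic slip---it should read $\frac{42D^2}{4}$). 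So your instinct that ``if one bounds the inner product crudely by Cauchy--Schwarz one slightly overshoots'' is right, and the extra care in your version is not just a cosmetic improvement---it is needed to actually certify the stated constant.
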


Now we present the main technical lemma of this section, where we show the potential function $P_t$ is approximately non-increasing. 
\begin{lemma}
\label{lem:potential}
    In the same setup of \cref{thm:last-iterate rate}, if we choose $\eta = \frac{\sqrt{q}}{L}$ for any $q \in (0, \frac{1}{4})$, then for all $t \ge 2$, \[P_{t+1} \le P_t + \frac{3q}{2(1-4q)}\InNorms{\eta V(\boldx_{t+1}) + \eta c_{t+1}}^2.\]
\end{lemma}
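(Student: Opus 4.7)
The plan is to expand $P_{t+1} - P_t$ directly and assemble the resulting expression using three ingredients: a Halpern-type rewriting of the \ref{AOG} update, monotonicity of $V + N_{\X}$, and Lipschitzness of $V$ together with the nonexpansiveness of the projection.

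First I would recast the update into a Halpern-type identity. From the definition of $c_{t+1}$ together with the \ref{AOG} update for $\boldx_{t+1}$, a short manipulation yields
\begin{equation*}
(t+1)(\boldx_{t+1} - \boldx_1) = t(\boldx_t - \boldx_1) - (t+1)\eta\left(V(\boldx_{t+\half}) + c_{t+1}\right).
\end{equation*}
This identity is the crux of the argument: it converts the anchoring term $(t+1)\InAngles{\eta V(\boldx_{t+1}) + \eta c_{t+1},\,\boldx_{t+1} - \boldx_1}$ appearing in $P_{t+1}$ into a combination of $t\InAngles{\eta F_{t+1},\,\boldx_t - \boldx_1}$ (where $F_{t+1} := V(\boldx_{t+1}) + c_{t+1}$) and a squared term involving $V(\boldx_{t+\half}) + c_{t+1}$, which lines up with the analogous expression from $P_t$ up to correctable increments.

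Next I would invoke monotonicity. Since $V$ is monotone on $\X$ and $c_s \in N_{\X}(\boldx_s)$ implies $\InAngles{c_{t+1} - c_t, \boldx_{t+1} - \boldx_t} \ge 0$, the operator $V + N_{\X}$ is monotone, so
\begin{equation*}
\InAngles{F_{t+1} - F_t,\,\boldx_{t+1} - \boldx_t} \ge 0.
\end{equation*}
This inequality, combined with one more application of the Halpern identity to re-express $\boldx_{t+1} - \boldx_t$, gives a lower bound on the cross term $t\InAngles{\eta F_{t+1} - \eta F_t,\,\boldx_t - \boldx_1}$ produced by the first step. Third, I would control the gradient-variation squares using Lipschitzness: because $\boldx_{t+1}$ and $\boldx_{t+\half}$ are projections of vectors differing only by $\eta(V(\boldx_{t+\half}) - V(\boldx_{t-\half}))$, nonexpansiveness of $\Pi_{\X}$ together with the $L$-Lipschitz bound on $V$ gives
\begin{equation*}
\InNorms{\eta V(\boldx_{t+1}) - \eta V(\boldx_{t+\half})}^2 \le q\,\InNorms{\eta V(\boldx_{t+\half}) - \eta V(\boldx_{t-\half})}^2,
\end{equation*}
where $q := (L\eta)^2 \in (0, 1/4)$. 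A Young-inequality split then relates the right-hand side to $\InNorms{\eta V(\boldx_t) - \eta V(\boldx_{t-\half})}^2$ (the term appearing in $P_t$) up to an error $\InNorms{\eta V(\boldx_{t+\half}) - \eta V(\boldx_t)}^2$, which is again controlled by Lipschitzness via the projection identity defining $\boldx_{t+\half}$.

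Assembling the three ingredients, the negative contributions in $P_{t+1} - P_t$ absorb all the cross terms, and the only surviving positive residual is a multiple of $\InNorms{\eta V(\boldx_{t+1}) + \eta c_{t+1}}^2$; balancing the Young-inequality weights against the contraction factor $q$ produces exactly the coefficient $\frac{3q}{2(1-4q)}$. The main obstacle is this final bookkeeping: one must track the $t$-dependent coefficients carefully so that they telescope across consecutive steps, and in particular ensure that no factor of $t$ slips in front of $\InNorms{\eta F_{t+1}}^2$—otherwise the inequality would be too weak to yield the $\mathcal{O}(1/t^2)$ bound on $\InNorms{V(\boldx_t) + c_t}^2$ in \Cref{lem:ls-tangent}, and with it the $\mathcal{O}(1/T)$ last-iterate rate. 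The Halpern-type identity in the first step is precisely the mechanism that makes this cancellation possible, and it is why the accelerated $\mathcal{O}(1/T)$ rate (rather than $\mathcal{O}(1/\sqrt{T})$) emerges.
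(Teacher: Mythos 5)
Your sketch identifies several of the right building blocks—the Halpern-type identity $(t+1)(\boldx_{t+1}-\boldx_1)=t(\boldx_t-\boldx_1)-(t+1)\eta\,(V(\boldx_{t+\half})+c_{t+1})$ you write down is correct, and monotonicity of $V+N_\X$ together with Lipschitzness of $V$ do all appear in the argument. However, there are genuine gaps.

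First, you are missing an essential ingredient: the projection optimality condition at the \emph{intermediate} point $\boldx_{t+\half}$, namely $\InAngles{\boldx_t-\eta V(\boldx_{t-\half})+\tfrac{1}{t+1}(\boldx_1-\boldx_t)-\boldx_{t+\half},\ \boldx_{t+\half}-\boldx_{t+1}}\ge 0$ (the paper's inequality~\eqref{eq:potential-6}). This is what lets the algebra relate $\boldx_{t+\half}$ and $\boldx_{t+1}$, and without it the cross terms do not close. Second, your Lipschitz estimate $\InNorms{\eta V(\boldx_{t+1})-\eta V(\boldx_{t+\half})}^2\le q\,\InNorms{\eta V(\boldx_{t+\half})-\eta V(\boldx_{t-\half})}^2$ is a composite inequality; the paper instead retains the primitive form $q\,\InNorms{\boldx_{t+1}-\boldx_{t+\half}}^2\ge\InNorms{\eta V(\boldx_{t+1})-\eta V(\boldx_{t+\half})}^2$ precisely because the displacement $\InNorms{\boldx_{t+1}-\boldx_{t+\half}}^2$ must stay visible in the identity and be absorbed into the resulting sum of squares. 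Your variant discards it, and the subsequent ``Young-inequality split then relates ... up to an error'' risks reintroducing a $\Theta(t)$-weighted error that cannot be tolerated: since the potential carries $\Theta(t^2)$ coefficients, the lemma needs an \emph{exact} cancellation of the $t$-growing cross terms, leaving only an $\mathcal{O}(1)$ additive remainder, and loose estimates generally will not achieve this.

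Most importantly, the step you describe as ``the final bookkeeping'' is not bookkeeping; it is the entire proof. The paper establishes an exact algebraic identity (Proposition~\ref{prop:identity}, with carefully chosen $t$- and $q$-dependent weights on each of inequalities~\eqref{eq:potential-1}--\eqref{eq:potential-6}) expressing $P_t-P_{t+1}$ minus those weighted nonnegative terms as a sum of two perfect squares plus a residual that is then completed to a square, and this identity is nontrivial enough that the authors verified it by computer algebra. Your proposal asserts that ``the negative contributions absorb all the cross terms'' and that the coefficients ``produce exactly $\frac{3q}{2(1-4q)}$'' without exhibiting the combination that makes this happen. As written, the proposal is a plausible plan for a proof rather than a proof; to complete it you would need to write down explicit weights for each of your ingredient inequalities (including the missing optimality condition at $\boldx_{t+\half}$) and verify an identity of the same kind.
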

\begin{proof}
    We show $P_t - P_{t+1}$ minus a few non-negative terms is at least $- \frac{3q}{2(1-4q)}\InNorms{\eta V(\boldx_{t+1}) + \eta c_{t+1}}^2$. Here we present the list of non-negative terms that we use in the proof.

    \paragraph{Non-Negative Terms}
     Since the game is monotone, we have 
     \begin{align}\label{eq:potential-1}
        \InAngles{\eta V(\boldx_{t+1})-\eta V(\boldx_t), \boldx_{t+1}-\boldx_t} \ge 0.
    \end{align}
    Using the $L$-Lipschitzness of $V$ and the fact that $(\eta L)^2 \le q $, we have 
    \begin{gather}\label{eq:potential-2}
         q\InNorms{\boldx_{t+1} - \boldx_{t+\half}}^2 -  \InNorms{\eta V(\boldx_{t+1}) - \eta V(\boldx_{t+\half})}^2\ge 0.
    \end{gather}
    Since $c_t$ lies in the normal cone $N_{\X}(\boldx_{t})$ and $c_{t+1}$ lies in the normal cone $N_{\X}(\boldx_{t+1})$, by  the definition of normal cone we have 
    \begin{align}
        \InAngles{\eta c_{t+1}, \boldx_{t+1} - \boldx_t} \ge 0 \label{eq:potential-3} \\
        \InAngles{\eta c_t, \boldx_t - \boldx_{t+\half}} \ge 0 \label{eq:potential-4} \\
        \InAngles{\eta c_t, \boldx_t - \boldx_{t+1}} \ge 0 
        \label{eq:potential-5}
    \end{align}
    As $\boldx_t - \eta V(\boldx_{t-\half}) + \frac{1}{t+1}(\boldx_1 - \boldx_t)-\boldx_{t+\half} $ lies in the normal cone $N_\X(\boldx_{t+\half})$, we also have 
    \begin{align}\label{eq:potential-6}
        \InAngles{\boldx_t - \eta V(\boldx_{t-\half}) + \frac{\boldx_1 - \boldx_t}{t+1}-\boldx_{t+\half},\boldx_{t+\half} -\boldx_{t+1}} \ge 0.
    \end{align}

    \paragraph{Descent Identity}
    For convenience, we denote LHSI as ``left-hand side of inequality". We have the following identity by \cref{prop:identity}: 
    \begin{align*}
        &P_t - P_{t+1} - t(t+1) \cdot \text{LHSI}~\eqref{eq:potential-1} - \frac{t(t+1)}{4q} \cdot \text{LHSI}~\eqref{eq:potential-2} \nonumber \\
        &\quad - t(t+1) \cdot \text{LHSI}~\eqref{eq:potential-3} \\
        & \quad - \frac{t(t+1)}{2} \cdot \InParentheses{\text{LHSI}~\eqref{eq:potential-4} + \text{LHSI}~\eqref{eq:potential-5} + \text{LHSI}~\eqref{eq:potential-6}} \nonumber \\
        & = \frac{t(t+1)}{2} \InNorms{\frac{\boldx_{t+\half} -\boldx_{t+1}}{2} + \eta V(\boldx_t) - \eta V(\boldx_{t+\half})}^2 \nonumber \\
        & + \frac{t(t+1)}{2} \InNorms{\frac{\boldx_{t+\half} +\boldx_{t+1}}{2}- \boldx_t + \eta V(\boldx_t) + c_t - \frac{\boldx_1 -\boldx_t}{t+1}}^2\\
        &  + \underbrace{\frac{(1-4q)t-4q}{4q} (t+1) \InNorms{\eta V(\boldx_{t+\half}) - \eta V(\boldx_{t+1})}^2}_{\mathbf{I}}  \\
        &  + \underbrace{(t+1) \cdot \InAngles{\eta V(\boldx_{t+\half}) - \eta V(\boldx_{t+1}), \eta V(\boldx_{t+1}) + \eta c_{t+1}}.}_{\mathbf{II}}  
    \end{align*}
    Further using identity $\InNorms{a}^2 + \InAngles{a,b} = \InNorms{a+\frac{b}{2}}^2 - \frac{1}{4}\InNorms{b}^2 $, we can simplify the last two terms:
    \begin{align*}
        &\mathbf{I} + \mathbf{II} \\
        &= \InNorms{A (\eta V(\boldx_{t+\half}) - \eta V(\boldx_{t+1})) + B (\eta V(\boldx_{t+1}) + \eta c_{t+1})  }^2 \\
        & \quad - \frac{q(t+1)}{(1-4q)t-4q} \InNorms{\eta V(\boldx_{t+1}) + c_{t+1}}^2 \\
        & \ge - \frac{3q}{2(1-4q)} \InNorms{\eta V(\boldx_{t+1}) + c_{t+1}}^2,
    \end{align*}
    where $A = \sqrt{\frac{(1-4q)t-4q}{4q} (t+1) }$ , $B = \sqrt{\frac{q}{(1-4q)t-4q}(t+1)}$, and we use the fact that $\frac{t+1}{t} \le \frac{3}{2}$ for $t \ge 2$ in the last inequality. Combining the above two inequalities and the fact that we only add non-positive terms to $P_t - P_{t+1}$, we conclude that $P_{t+1} \le P_t + \frac{3q}{2(1-4q)}\InNorms{\eta V(\boldx_{t+1}) + c_{t+1}}^2$.
\end{proof}

Using the fact that the potential function $P_t$ is approximately non-increasing, we are able to use induction to show last-iterate convergence rate of the sequence $(x_t)_{t\ge 2}$.
\begin{lemma}
\label{lem:ls-tangent}
If $\X$ is bounded by $D$ and $\eta \in (0, \frac{1}{\sqrt{6}L})$, then we have  for all $T\ge 2$,
\begin{align*}
    \InNorms{V(\boldx_T) + c_T} \le \frac{13D}{\eta T}~~\text{and}~~\InNorms{ V(\boldx_T) -  V(\boldx_{T-\half})} \le \frac{13D}{\eta T}.
\end{align*}
\end{lemma}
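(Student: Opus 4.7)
The plan is to prove the lemma by induction on $T$, leveraging the approximate monotonicity of $P_t$ from \cref{lem:potential}. Writing $a_t := \InNorms{\eta V(\boldx_t) + \eta c_t}$ and $b_t := \InNorms{\eta V(\boldx_t) - \eta V(\boldx_{t-\half})}$, the goal becomes $a_T, b_T \le 13 D/T$ for every $T \ge 2$. Let $q := (\eta L)^2 \le 1/6$ and $C := \tfrac{3q}{2(1-4q)} \le 3/4$. Telescoping \cref{lem:potential} from $s = 2$ to $T-1$ and combining with \cref{prop:P2} yields the upper bound $P_T \le 9 D^2 + C \sum_{s=3}^T a_s^2$. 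Cauchy--Schwarz on the cross term of $P_T$, together with $\InNorms{\boldx_T - \boldx_1} \le D$, gives the matching lower bound $P_T \ge \tfrac{T(T+1)}{2}(a_T^2 + b_T^2) - T D a_T$. Subtracting $C a_T^2$ from both sides produces the master inequality
\begin{equation}\label{eq:prop-master}
A_T\, a_T^2 + \tfrac{T(T+1)}{2} b_T^2 - T D a_T \;\le\; M_T,
\end{equation}
where $A_T := \tfrac{T(T+1)}{2} - C$ and $M_T := 9 D^2 + C \sum_{s=3}^{T-1} a_s^2$.

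Next I would carry out the inductive step on $a_T$. Assuming $a_s \le 13 D/s$ for all $2 \le s < T$ and using $\sum_{s=3}^{\infty} s^{-2} = \pi^2/6 - 5/4 \le 0.4$, one obtains $M_T \le 9 D^2 + (3/4) \cdot 169 \cdot 0.4\, D^2 \le 60 D^2$. Dropping the nonnegative $b_T^2$ term in \eqref{eq:prop-master} yields a quadratic inequality in $a_T$, and a short calculation shows that the target $a_T \le 13 D/T$ is equivalent to $M_T \le D^2\bigl(169 A_T/T^2 - 13\bigr)$. Since $A_T \ge T^2/2$ for all $T \ge 2$ (directly from $T(T+1)/2 - 3/4 \ge T^2/2$ when $T \ge 3/2$), one has $169 A_T/T^2 - 13 \ge 84.5 - 13 = 71.5$, and the slack $60 \le 71.5$ closes the induction on $a_T$.

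Finally, with $a_T \le 13 D/T$ in hand, the bound on $b_T$ drops out of \eqref{eq:prop-master}: dropping the nonpositive term $-A_T a_T^2$ and using $T D a_T \le 13 D^2$ gives $\tfrac{T(T+1)}{2}\, b_T^2 \le M_T + 13 D^2 \le 73 D^2$, so $b_T \le \sqrt{146}\, D/T < 13 D/T$. The main obstacle I anticipate is the delicate numerical bookkeeping that makes the specific constant $13$ close the induction: it is the hypothesis $\eta \le 1/(\sqrt{6} L)$, i.e., $q \le 1/6$, that forces $C \le 3/4$ and keeps the accumulated error $C \sum a_s^2$ in $M_T$ safely below the quadratic's budget of $71.5 D^2$. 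A step size closer to the $q = 1/4$ regime allowed by \cref{lem:potential} would inflate $C$ and require a substantially larger constant in the final bound.
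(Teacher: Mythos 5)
Your proof is correct, and it follows the same high-level strategy as the paper (lower-bound the potential $P_T$, telescope the approximate monotonicity from \cref{lem:potential}, and close an induction on the resulting recurrence for $a_T$), but you handle the two delicate steps differently. First, to control the cross term $T\langle \eta V(\boldx_T)+\eta c_T,\,\boldx_T-\boldx_1\rangle$ you apply Cauchy--Schwarz directly with $\|\boldx_T-\boldx_1\|\le D$ and keep it as a \emph{linear} $-TDa_T$ contribution, then analyze the resulting quadratic in $a_T$; the paper instead first splits $\boldx_T-\boldx_1 = (\boldx_T-\boldx_\star)+(\boldx_\star-\boldx_1)$ using a Nash equilibrium $\boldx_\star$, drops the nonnegative term $\langle V(\boldx_T)+c_T,\boldx_T-\boldx_\star\rangle$, and then applies Young's inequality to fold the remainder into $\frac{t(t+1)}{4}a_t^2$ plus the constant $\|\boldx_\star-\boldx_1\|^2$. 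Second, you carry out the induction inline on the quadratic inequality, whereas the paper cites a packaged sequence lemma (\cref{prop:sequence analysis}). Both give constants of the same order ($13$), and yours is arguably more elementary and self-contained. The one thing you lose by bypassing $\boldx_\star$ is that your cross-term bound uses the diameter $D$ of $\X$ on $\|\boldx_T-\boldx_1\|$ rather than the fixed quantity $\|\boldx_\star-\boldx_1\|$; this is why the paper's decomposition extends cleanly to the unbounded-domain variant in Appendix D, while yours would need the separate bootstrap argument there to first control $\|\boldx_t-\boldx_1\|$. For the lemma as stated (bounded $\X$), both are perfectly valid.
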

\begin{proof}
    Let $\boldx_{\star}$ be a Nash equilibrium of $\mathcal{G}$. For any $t \ge 2$, we have
    \begingroup\makeatletter\def\f@size{9.5}\check@mathfonts
\def\maketag@@@#1{\hbox{\m@th\large\normalfont#1}}%
    \begin{align*}
        &P_t \\
        &=\frac{t(t+1)}{2}\InParentheses{ \InNorms{\eta V(\boldx_t)+\eta c_t}^2 + \InNorms{\eta V(\boldx_t) - \eta V(\boldx_{t-\half})}^2}  \\& \quad  + t \InAngles{\eta V(\boldx_t) + \eta c_t, \boldx_\star - \boldx_1} + t \InAngles{\eta V(\boldx_t) + \eta c_t, \boldx_t - \boldx_\star} \\
        &\ge \frac{t(t+1)}{2}\InParentheses{ \InNorms{\eta V(\boldx_t)+\eta c_t}^2 + \InNorms{\eta V(\boldx_t) - \eta V(\boldx_{t-\half})}^2}  \\
        &\quad + t \InAngles{\eta V(\boldx_t) + \eta c_t, \boldx_{\star} - x_1} \\
        &\ge \frac{t(t+1)}{4}\InParentheses{ \InNorms{\eta V(\boldx_t)+\eta c_t}^2 + 2\InNorms{\eta V(\boldx_t) - \eta V(\boldx_{t-\half})}^2} \\
        &\quad - \frac{t}{t+1} \InNorms{\boldx_{\star}-\boldx_1}^2 \\
        & \ge \frac{t(t+1)}{4}\InParentheses{ \InNorms{\eta V(\boldx_t)+\eta c_t}^2 + 2\InNorms{\eta V(\boldx_t) - \eta V(\boldx_{t-\half})}^2} \\
        &\quad -  \InNorms{\boldx_{\star}-\boldx_1}^2.
    \end{align*}\endgroup
    In the first inequality, we drop a positive term where  $\InAngles{V(\boldx_t), \boldx_t - \boldx_\star} \ge \InAngles{V(\boldx_\star), \boldx_t - \boldx_\star}  \ge 0$ since $\boldx_\star$ is Nash equilibrium, and $\InAngles{c_t, \boldx_t - \boldx_\star} \ge 0$ as $c_t \in N_\X(\boldx_t)$. In the second inequality, we apply inequality $\InAngles{a,b} \ge -\frac{\alpha}{4}\InNorms{a}^2 - \frac{1}{\alpha}\InNorms{b}^2$ with $a = \sqrt{t}\eta(V(\boldx_t) + c_t)$, $b = \sqrt{t} (\boldx_\star - \boldx_1)$, and $\alpha = t+1$; we use $ \frac{t}{t+1} \le 1$ in the last inequality.
    Combing the above inequality with Lemma~\ref{lem:potential} and Proposition~\ref{prop:P2}, we get for any $t\ge 2$,
    \begin{align*}
        &\frac{t(t+1)}{4}\InParentheses{ \InNorms{\eta V(\boldx_t)+\eta c_t}^2 + 2\InNorms{\eta V(\boldx_t) - \eta V(\boldx_{t-\half})}^2} \\
        &\le  \InNorms{\boldx_{\star}-\boldx_1}^2 + P_t \\
        &\le \InNorms{\boldx_{\star}-\boldx_1}^2 + P_2 + \frac{1}{3} \sum_{s=2}^{t-1} \InNorms{\eta V(\boldx_s) + \eta c_s}^2 \\
        & \le 10D^2 + \frac{1}{3} \sum_{s=2}^{t-1} \InNorms{\eta V(\boldx_s) + \eta c_s}^2 .
    \end{align*}
    By \cref{prop:sequence analysis}, we can conclude that for any $t \ge 2$, 
    \begin{align*}
        \InNorms{\eta V(\boldx_t)+\eta c_t}^2 + 2\InNorms{\eta V(\boldx_t) - \eta V(\boldx_{t-\half})}^2 \le \frac{160D^2}{t^2}.
    \end{align*}
    This completes the proof as $13^2 = 169 \ge 160$.
\end{proof}

Using the last-iterate convergence rate on $(\boldx_t)_{t\ge 2}$, we only need to bound the distance between $\boldx_t$ and $\boldx_{t+\half}$. 

\begin{lemma}
\label{lem:residual}
In the same setup of \cref{thm:last-iterate rate}, we have for any $t\ge2$, $ \InNorms{\boldx_{t+\half} - \boldx_t} \le \frac{27D}{t}$.
\end{lemma}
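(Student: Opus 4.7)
The plan is to express both $\boldx_t$ and $\boldx_{t+\half}$ as projections onto $\X$, apply the non-expansiveness of the Euclidean projection, and then invoke \Cref{lem:ls-tangent} to control the resulting terms.

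First, I would use property (i) of the normal cone listed in the Preliminaries. Since $c_t \in N_\X(\boldx_t)$, we have the trivial identity
\[
\boldx_t \;=\; \Pi_\X\!\left[\boldx_t + \eta c_t\right].
\]
At the same time, the update rule of \ref{AOG} gives
\[
\boldx_{t+\half} \;=\; \Pi_\X\!\left[\boldx_t - \eta V(\boldx_{t-\half}) + \tfrac{1}{t+1}(\boldx_1-\boldx_t)\right].
\]
Applying non-expansiveness of $\Pi_\X$ and then adding and subtracting $\eta V(\boldx_t)$ inside the norm yields
\[
\InNorms{\boldx_{t+\half}-\boldx_t}
\;\le\; \InNorms{-\eta(V(\boldx_t)+c_t) \,+\, \eta(V(\boldx_t)-V(\boldx_{t-\half})) \,+\, \tfrac{1}{t+1}(\boldx_1-\boldx_t)}.
\]

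The second step is a plain triangle inequality followed by the bounds already available. The first two summands are exactly the two quantities controlled by \Cref{lem:ls-tangent}, each of which is at most $\frac{13D}{t}$. The third summand is bounded by $\frac{D}{t+1} \le \frac{D}{t}$ using the diameter assumption on $\X$. Adding the three contributions gives
\[
\InNorms{\boldx_{t+\half}-\boldx_t} \;\le\; \frac{13D}{t} + \frac{13D}{t} + \frac{D}{t} \;=\; \frac{27D}{t},
\]
as desired.

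There is essentially no obstacle in this argument since all the heavy lifting has been done in \Cref{lem:ls-tangent}; the only thing to be careful about is the choice of representation $\boldx_t = \Pi_\X[\boldx_t + \eta c_t]$, which is what makes the non-expansiveness comparison meaningful and brings the tangent-residual-like quantity $\eta(V(\boldx_t)+c_t)$ out of the projection.
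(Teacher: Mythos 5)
Your proof is correct and gives the same constant $\tfrac{27D}{t}$, with the same two inputs from \Cref{lem:ls-tangent} plus the diameter bound, but the decomposition is slightly different from the paper's. The paper first splits $\InNorms{\boldx_{t+\half}-\boldx_t}$ by a triangle inequality through the auxiliary point $\Pi_\X[\boldx_t-\eta V(\boldx_t)]$, and then applies non-expansiveness twice (once to each piece), again using the representation $\boldx_t=\Pi_\X[\boldx_t+\eta c_t]$. You instead make a single non-expansiveness comparison between the two pre-images $\boldx_t+\eta c_t$ and $\boldx_t-\eta V(\boldx_{t-\half})+\tfrac{1}{t+1}(\boldx_1-\boldx_t)$, and do the add-and-subtract of $\eta V(\boldx_t)$ and the triangle inequality \emph{inside} the norm afterward. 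Your version is a touch more streamlined since it avoids the auxiliary point; the paper's version makes it a bit more transparent that the two contributions $\InNorms{V(\boldx_t)+c_t}$ and $\InNorms{V(\boldx_t)-V(\boldx_{t-\half})}$ are being bounded at two separate projection comparisons. Either is fine, and both rest on the same normal-cone trick $\boldx_t=\Pi_\X[\boldx_t+\eta c_t]$ that lets the tangent-residual proxy appear.
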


\paragraph{Proof of \cref{thm:last-iterate rate}} Given \cref{lem:ls-tangent} that proves the last-iterate convergence rate on the sequence $(\boldx_t)_{t\ge 2}$, and \cref{lem:residual} that upper bounds the distance between $\boldx_t$ and $\boldx_{t+\half}$,  we are now ready to prove the last-iterate convergence rate for $(\boldx_{t+\half})_{t\ge 2}$. 

Note that $\boldx_t - \eta V(\boldx_{t-\half}) + \frac{\boldx_1 - \boldx_t}{t+1} - \boldx_{t+\half} \in N_\X(\boldx_{t+\half})$, thus we can upper bound the tangent residual at $\boldx_{t+\half}$ by
\begin{align*}
    &r^{tan}(\boldx_{t+\half}) \\
    &= \frac{1}{\eta} \min_{c \in N_\X(\boldx_{t+\half})} \InNorms{\eta V(\boldx_{t+\half}) + c} \\
    &\le \frac{1}{\eta} \InNorms{ \eta V(\boldx_{t+\half}) +  \boldx_t - \eta V(\boldx_{t-\half}) + \frac{\boldx_1 - \boldx_t}{t+1} - \boldx_{t+\half}} \\
    &\le \InNorms{V(\boldx_{t}) -V(\boldx_{t-\half})} +    \frac{1+\eta L}{\eta}\InNorms{\boldx_{t+\half} - \boldx_t} + \frac{D}{\eta (t+1)}\\
    &\le \frac{13D}{\eta t} + \frac{\frac{3}{2} \cdot 27D}{\eta t} + \frac{D}{\eta (t+1)} \tag{\cref{lem:ls-tangent}, \ref{lem:residual} and $\eta L \le \frac{1}{2}$} \\
    & \le \frac{55D}{\eta t}, 
\end{align*}
where we use the triangle inequality and the $L$-Lipschitzness of $V$ in the second inequality. This completes the first part of \cref{thm:last-iterate rate}. The second part of \cref{thm:last-iterate rate} follows directly from the first part of \cref{thm:last-iterate rate} and \cref{lem:gap bounded by residual}.

\section{Dynamic Regret and Second-Order Gradient Variation}
Recent works on no-regret learning in games have provided near-optimal bounds for players' individual \emph{external} or \emph{swap} regret. In particular, \citet{daskalakis2021near-optimal,anagnostides2022near-optimal,anagnostides2022uncoupled} achieve logarithmic regret bounds for general-sum games, and the bound can be sharpen to $O(1)$ if the games are monotone~\citep{hsieh2021adaptive}. However, \emph{dynamic regret} is a much stronger concept, which is impossible to achieve in the single-agent adversarial setting and tightly relates to the concept of last-iterate convergence in game settings. For example, the $\mathcal{O}(\frac{1}{\sqrt{T}})$ last-iterate convergence rate of \ref{OG} implies a $\mathcal{O}(\sqrt{T})$ individual dynamic regret bound in monotone games.  To the best of our knowledge, $\mathcal{O}(\sqrt{T})$ is the best bound for dynamic regret even in two-player zero-sum games.

We significantly improve the bound and show that the individual dynamic regret is at most $\mathcal{O}(\log T)$ if each player employs \ref{AOG} in monotone games\footnote{\citet{anagnostides2023convergence} shows an $\mathcal{O}(\log T)$ regret bound for two-player zero-sum games but under a stronger two-point feedback model. In their model, the algorithm is allowed to query the payoff vector/gradient at two different strategies in each iteration, while the regret is calculated with respect to only the first queried strategy.}. This is made possible by the fast $\mathcal{O}(\frac{1}{T})$ last-iterate convergence rate of \ref{AOG}.
\begin{theorem}[Individual Dynamic Regret Bound]
\label{thm:dynamic-regret}
    In the same setup of \cref{thm:last-iterate rate}, for any $i \in [N]$ and $T\ge 2$, 
    \begin{align*}
        \mathrm{DynamicReg}^i(T) \le \mathcal{O}(\log T).
    \end{align*}
\end{theorem}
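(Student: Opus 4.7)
\medskip

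\noindent\textbf{Proof plan for \Cref{thm:dynamic-regret}.} My plan is to derive \Cref{thm:dynamic-regret} as a direct corollary of the last-iterate convergence rate in \Cref{thm:last-iterate rate}, by relating each player's instantaneous dynamic regret to the total gap function $\tgap(\boldx_{t+\half})$.

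First, I would set up the per-round decomposition. Define the instantaneous dynamic regret of player $i$ at round $t$ as $r^i_t := \ell^i(x^i_{t+\half}, x^{-i}_{t+\half}) - \min_{x \in \X^i} \ell^i(x, x^{-i}_{t+\half})$. Since $\ell^i$ is convex in its $i$-th argument, $r^i_t \ge 0$ for every $i$ and $t$. Moreover, by the very definition of the total gap function, $\sum_{i=1}^N r^i_t = \tgap(\boldx_{t+\half})$. Because every summand is non-negative, this immediately yields the per-player bound $r^i_t \le \tgap(\boldx_{t+\half})$.

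Second, I would invoke \Cref{thm:last-iterate rate}, which gives $\tgap(\boldx_{t+\half}) \le \frac{55D^2}{\eta t}$ for every $t \ge 2$. Summing, for each player $i$,
\begin{align*}
\mathrm{DynamicReg}^i(T) \;=\; \sum_{t=1}^T r^i_t \;\le\; r^i_1 + \sum_{t=2}^T \frac{55 D^2}{\eta t} \;\le\; r^i_1 + \frac{55 D^2}{\eta}(\ln T + 1).
\end{align*}
It remains to argue that $r^i_1$ is bounded by a constant independent of $T$. By convexity, $r^i_1 \le \max_{x\in \X^i}\InAngles{V^i(\boldx_{3/2}),\, x^i_{3/2}-x} \le D\,\InNorms{V^i(\boldx_{3/2})}$, and $\InNorms{V^i(\boldx_{3/2})} \le \InNorms{V(\boldx_{3/2})}$, which is bounded in terms of $L$, $D$, and any fixed reference point by the $L$-Lipschitzness of $V$ on the bounded set $\X$.

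I expect there to be essentially no obstacle here: the only slightly delicate step is noticing that the total gap splits as a sum of non-negative per-player ``regret-like'' terms, so that the joint convergence rate from \Cref{thm:last-iterate rate} transfers verbatim to each coordinate without any additional coupling argument. Everything after that is a harmonic sum bound. The exponential improvement over the $\mathcal{O}(\sqrt{T})$ state of the art thus traces back entirely to the accelerated $\mathcal{O}(1/T)$ last-iterate rate of \ref{AOG}, exactly as advertised.
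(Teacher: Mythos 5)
Your proof takes essentially the same route as the paper's: both bound each player's instantaneous dynamic regret at round $t$ by $\tgap(\boldx_{t+\half})$ (using that $\tgap$ is a sum of nonnegative per-player terms), invoke the $\mathcal{O}(1/t)$ rate from \Cref{thm:last-iterate rate}, and finish with a harmonic sum. The only cosmetic difference is that you spell out the per-player decomposition and the bound on $r^i_1$ more explicitly (and the nonnegativity of $r^i_t$ follows just from the definition of the minimum, no convexity needed).
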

\begin{proof}
    By the definition of dynamic regret and  total gap function, for any $T \ge 2$, we have
    \begin{align*}
        &\mathrm{DynamicReg}^i(T) = \sum_{t=1}^T \InParentheses{ \ell^i(\boldx_{t+\half}) - \min_{x' \in \X^i} \ell^i(x',x^{-i}_{t+\half})}\\
        &\le \mathcal{O}(1) + \sum_{t=2}^T \tgap(\boldx_{t+\half}) \le \sum_{t=2}^T \mathcal{O}(\frac{1}{t}) = \mathcal{O}(\log T). \qedhere
    \end{align*}
\end{proof}

Last-iterate convergence rate of \ref{AOG} also implies each player's bounded second-order gradient variation. We defer the proof of \cref{thm:second-order gradient variation} to Appendix~\ref{app:second-order}.
\begin{theorem}[Bounded Second-Order Gradient Variation]
\label{thm:second-order gradient variation}
In the same setup of \cref{thm:last-iterate rate} but with $\eta = \frac{1}{3L}$,  for any player $i$ and time $t\ge 2$, we have $S^i_T \le 4500\pi D^2 L^2$.
\end{theorem}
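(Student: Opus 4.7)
}
The plan is to reduce $\InNorms{g^i_{s+\half} - g^i_{s-\half}}^2$ to a displacement of the full action profile across one round and then sum a convergent $1/s^2$ series, using the $\mathcal{O}(1/t)$ last-iterate bounds already established in \cref{lem:ls-tangent} and \cref{lem:residual}. Since $g^i_{\cdot}$ is the $i$-th block of $V(\boldx_\cdot)$, I have $\InNorms{g^i_{s+\half} - g^i_{s-\half}} \le \InNorms{V(\boldx_{s+\half}) - V(\boldx_{s-\half})} \le L \cdot \InNorms{\boldx_{s+\half} - \boldx_{s-\half}}$ by $L$-Lipschitzness, and it thus suffices to prove $\InNorms{\boldx_{s+\half} - \boldx_{s-\half}} = \mathcal{O}(D/s)$ for $s \ge 2$.

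By the triangle inequality $\InNorms{\boldx_{s+\half} - \boldx_{s-\half}} \le \InNorms{\boldx_{s+\half} - \boldx_s} + \InNorms{\boldx_s - \boldx_{s-\half}}$. The first term is $\le \frac{27D}{s}$ by \cref{lem:residual}. For the second term, I would apply the \ref{AOG} update at round $t = s-1$: both $\boldx_s$ and $\boldx_{s-\half}$ are Euclidean projections of the common base point $\boldx_{s-1} + \frac{1}{s}(\boldx_1 - \boldx_{s-1})$ shifted by $-\eta V(\boldx_{s-\half})$ and $-\eta V(\boldx_{s-1-\half})$ respectively. Nonexpansiveness of $\Pi_\X$ therefore gives $\InNorms{\boldx_s - \boldx_{s-\half}} \le \eta \InNorms{V(\boldx_{s-\half}) - V(\boldx_{s-1-\half})}$. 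Inserting $\boldx_{s-1}$ via the triangle inequality and using $L$-Lipschitzness on the first piece and \cref{lem:ls-tangent} on the second piece yields $\InNorms{V(\boldx_{s-\half}) - V(\boldx_{s-1-\half})} \le L \InNorms{\boldx_{s-\half} - \boldx_{s-1}} + \InNorms{V(\boldx_{s-1}) - V(\boldx_{s-1-\half})} \le \frac{27LD}{s-1} + \frac{13D}{\eta(s-1)}$, which with $\eta = \frac{1}{3L}$ and $s-1 \ge s/2$ becomes $\mathcal{O}(D/s)$.

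Combining the two bounds gives an explicit constant $C$ with $\InNorms{\boldx_{s+\half} - \boldx_{s-\half}} \le \frac{CD}{s}$, hence $\InNorms{g^i_{s+\half} - g^i_{s-\half}}^2 \le \frac{C^2 L^2 D^2}{s^2}$. Summing over $s = 2,\ldots,T-1$ and using $\sum_{s\ge 2} \frac{1}{s^2} \le \frac{\pi^2}{6}$ yields $S^i_T \le \frac{C^2 \pi^2}{6} L^2 D^2$, which is bounded by $4500\pi D^2 L^2$ once the numerical constants from \cref{lem:ls-tangent} and \cref{lem:residual} are plugged in (and one slack $\pi$ is absorbed). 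The only mildly delicate point is tracking the constants tightly enough so the stated numerical bound holds; the overall structure is routine because both auxiliary lemmas already furnish the necessary $\mathcal{O}(1/t)$ decay. An edge case at $s = 2$ (where the bound $\InNorms{\boldx_{s-\half} - \boldx_{s-1}}$ from \cref{lem:residual} is not directly applicable for $t = 1$) can be handled separately using the diameter bound $\InNorms{\boldx_{s+\half} - \boldx_{s-\half}} \le D$, whose contribution is absorbed into the final constant.
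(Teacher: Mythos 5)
Your proof is correct, but it takes a more circuitous route than the paper's. The paper splits $V(\boldx_{t+\half})-V(\boldx_{t-\half})$ once, at $\boldx_t$, writing $\InNorms{V(\boldx_{t+\half})-V(\boldx_{t-\half})}^2 \le 2L^2\InNorms{\boldx_{t+\half}-\boldx_t}^2 + 2\InNorms{V(\boldx_t)-V(\boldx_{t-\half})}^2$, and then reads off the two pieces directly from \cref{lem:residual} and \cref{lem:ls-tangent} at the same index $t$. You instead apply $L$-Lipschitzness to the whole difference first, reducing to $\InNorms{\boldx_{s+\half}-\boldx_{s-\half}}$, then split at $\boldx_s$, and for the harder piece $\InNorms{\boldx_s - \boldx_{s-\half}}$ you invoke non-expansiveness of $\Pi_\X$ to re-obtain a gradient difference $\InNorms{V(\boldx_{s-\half})-V(\boldx_{s-3/2})}$, which you then again split (at $\boldx_{s-1}$) and bound by the same two lemmas at index $s-1$. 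This extra "back and forth" between iterate displacements and gradient differences is unnecessary — \cref{lem:ls-tangent} already gives you the gradient difference $\InNorms{V(\boldx_t)-V(\boldx_{t-\half})}$ at index $t$ directly — and it is precisely what pushes the lemma applications to $s-1$ and forces you to handle $s=2$ as a separate edge case (which you do correctly with the diameter bound). Both routes reach the stated bound because the constant $4500\pi D^2L^2$ has ample slack; your explicit constant comes out somewhat larger than the paper's (which produces exactly $4500 D^2L^2/t^2$ per term, then sums $\sum_{t\ge 2} 1/t^2 < \pi$), but it still fits comfortably under the target.
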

Bounded second-order gradient variation guarantees when each player employs \cref{alg} with the step-size adaptation procedure, they will always use constant step size. Combining \cref{thm:regret}, \cref{thm:last-iterate rate}, and \cref{thm:second-order gradient variation}, we conclude that \cref{alg} is doubly optimal. 
\begin{theorem}
\label{thm:adaptive-optimal}
    \cref{alg} automatically adapts to the environment and achieves $\mathcal{O}(\sqrt{T})$ regret in the adversarial setting and $\mathcal{O}(\frac{1}{T})$ last-iterate convergence rate in smooth monotone games.
\end{theorem}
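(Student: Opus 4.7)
The proof is essentially a careful combination of the three preceding theorems, so the plan is to unpack why each of them applies in its respective regime once we have verified that the step-size adaptation rule in Algorithm~\ref{alg} behaves correctly.

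For the adversarial half of the claim, I would simply invoke \Cref{thm:regret} directly: that theorem is stated about Algorithm~\ref{alg} itself (with the adaptive step-size rule in Lines~7--11), so it already guarantees $\mathcal{O}(D^2 G \sqrt{T} + G^2) = \mathcal{O}(\sqrt{T})$ regret against an arbitrary sequence of convex losses with bounded gradients. Nothing further is needed here beyond remarking that the guarantee requires no assumption on the environment.

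For the game half, the point to establish is that when every player runs Algorithm~\ref{alg} in an $L$-smooth monotone game with diameter at most $D$, the adaptive rule never decreases the step size below the initial value $\eta_1 = \frac{1}{3L}$, so Algorithm~\ref{alg} is operationally identical to \ref{AOG} with the constant step size $\eta = \frac{1}{3L}$. I would prove this by induction on $t$: assuming $\eta_s = \frac{1}{3L}$ for all $s\le t$, the joint trajectory up to round $t$ is precisely the trajectory produced by constant-step-size \ref{AOG}, which is the setup of \Cref{thm:second-order gradient variation}; that theorem then gives $S^i_{t+1}\le 4500\pi D^2 L^2$, so the condition in Line~7 fails and $\eta_{t+1} = \eta_t = \frac{1}{3L}$. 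Since $\frac{1}{3L} \le \frac{1}{\sqrt{6}L}$, the hypothesis of \Cref{thm:last-iterate rate} is satisfied, which yields $\gap(\boldx_{T+1/2}) \le \tfrac{55D^2}{\eta T} = \mathcal{O}(1/T)$.

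The only mildly subtle point is the potential circularity in the induction: \Cref{thm:second-order gradient variation} is stated for constant-step-size \ref{AOG}, yet I use it to argue that Algorithm~\ref{alg} maintains a constant step size. This is not a real issue because the inductive hypothesis makes the two dynamics literally identical through round $t$, so $S^i_{t+1}$ computed inside Algorithm~\ref{alg} equals the second-order gradient variation of constant-step-size \ref{AOG}, and I can therefore apply \Cref{thm:second-order gradient variation} at that round. Once this induction is in place the theorem follows immediately, so I expect the write-up to be short, with the verification of the induction being the only step that deserves any commentary.
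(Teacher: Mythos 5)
Your proposal is correct and takes essentially the same route as the paper, which simply states that combining \Cref{thm:regret}, \Cref{thm:last-iterate rate}, and \Cref{thm:second-order gradient variation} gives the result. Your explicit induction on $t$ to rule out the apparent circularity (using \Cref{thm:second-order gradient variation} for constant-step \ref{AOG} to show that \Cref{alg} never leaves the constant-step regime) is exactly the argument the paper leaves implicit, and is the right way to make it rigorous.
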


\section{Illustrative Experiments}
\begin{figure}[ht]
    \centering
    \begin{subfigure}
        \centering
        \includegraphics[width=0.4\textwidth]{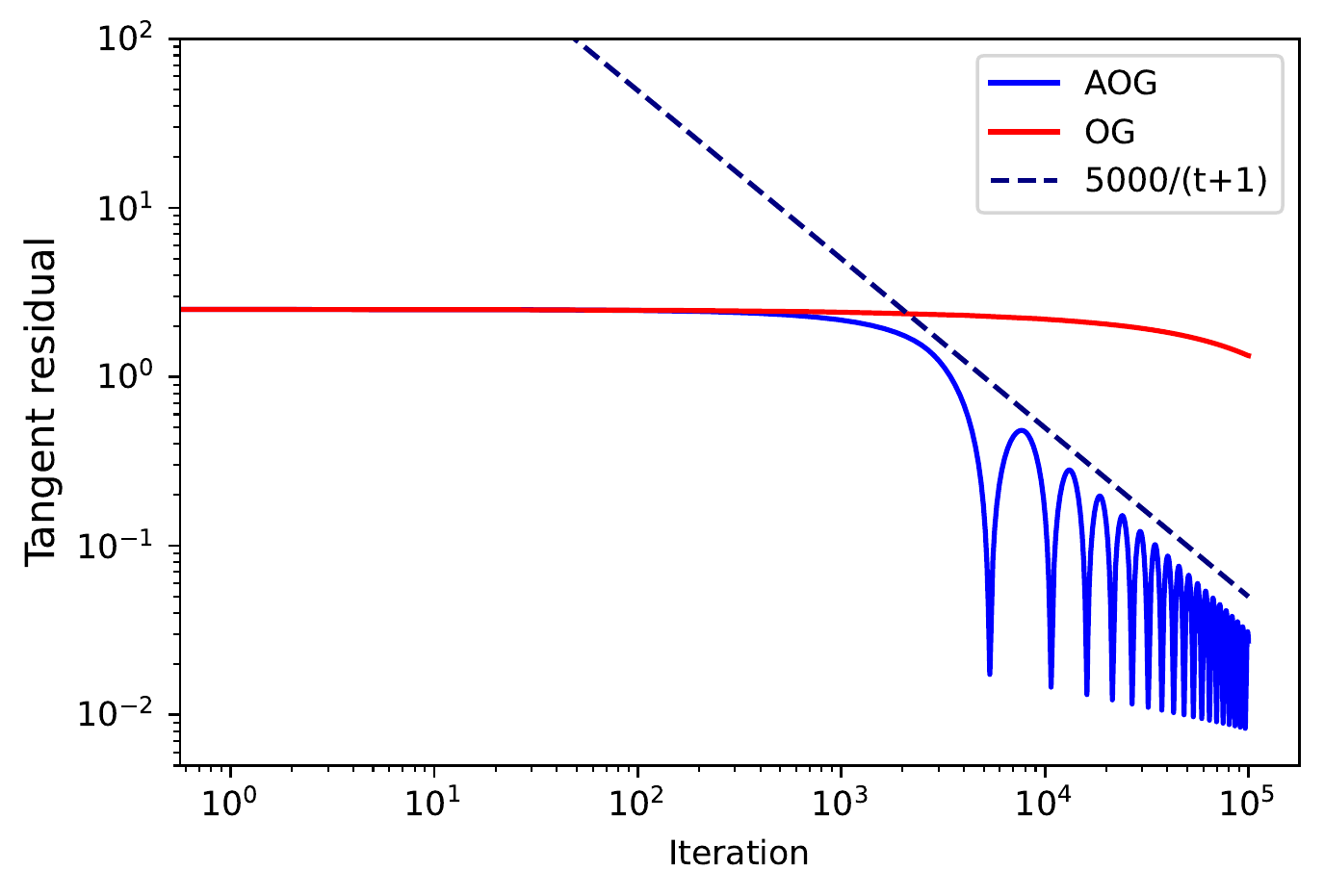} 
    \end{subfigure}\hfill
    \begin{subfigure}
        \centering
        \includegraphics[width=0.4\textwidth]{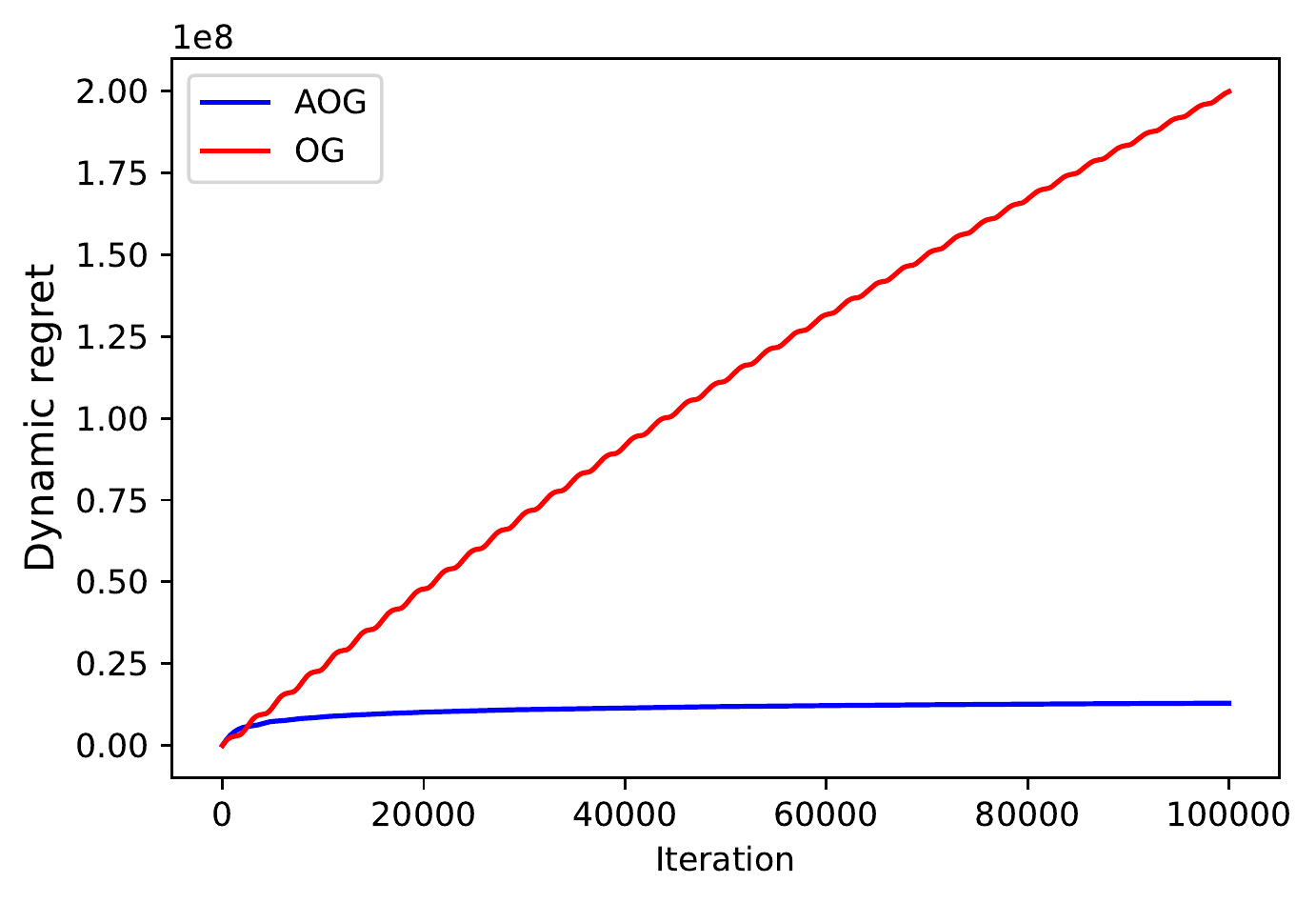} 
    \end{subfigure}
    \caption{Numerical Results of \ref{AOG} and \ref{OG}.  }
    \label{fig:experiment}
\end{figure}

In this section, we numerically verify our theoretical results through \cref{ex:min-max}. Let $A \in \R^{n\times n}$, $b, h \in \R^n$, and $\X, \Y \subseteq \R^n$, and $f: \X \times \Y \rightarrow R$ be of the form $f(x,y) = \frac{1}{2}x^\top Hx - h^\top x - \InAngles{Ax -b, y}$~\cite{ouyang2021lower}. We consider a convex-concave min-max optimization problem $\min_{x \in \X} \max_{y \in \Y} f(x,y)$, which is also a two-player zero-sum game with $f^1 = -f^2 = f$.  Details of the choices of $H, A, b, h, \X, \Y$ and step size $\eta$ are deferred to Appendix~\ref{app:experiment}. 

The numerical result is shown in \cref{fig:experiment}. We use $\boldsymbol{z}$ to denote $(x,y)$. When players use \ref{AOG}, the tangent residual of players' action profile $r^{tan}(\boldsymbol{z}_{t+\half})$ decreases at a rate of $\mathcal{O}(\frac{1}{T})$, and corroborates our theoretical results (\cref{thm:last-iterate rate}). Moreover, \ref{AOG} significantly outperforms \ref{OG} in terms of both the last-iterate convergence rate and the individual dynamic regret.

\section{Related Work}
\paragraph{Last-Iterate Convergence of No-regret learning in Games} 
There is a vast literature on no-regret learning in games. For strongly monotone games, linear last-iterate convergence rate is known~\citep{tseng_linear_1995,liang_interaction_2019,mokhtari2020convergence,zhou_convergence_2020}. Even under bandit feedback or noisy gradient feedback, optimal sub-linear last-iterate convergence rate is achieved by no-regret learning algorithms for strongly monotone games~\citep{lin_doubly_2022, jordan2022adaptive}.

Obtaining last-iterate convergence rate to Nash equilibria beyond strongly monotone games received extensive attention recently. \citet{daskalakis2018limit} proved asymptotic convergence of the optimistic gradient \eqref{OG} algorithm in zero-sum games. Asymptotic convergence was also achieved in variationally stable games~\citep{zhou_mirror_2017,zhou2017countering, mertikopoulos2019learning,hsieh2021adaptive} even with noisy feedback~\citep{hsieh2022no}. Finite time $\mathcal{O}(\frac{1}{\sqrt{T}})$ convergence was shown for unconstrained cocoercive games~\citep{lin_finite-time_2020} and unconstrained monotone games~\citep{golowich2020tight}. For bilinear games over polytopes, \cite{wei2021linear} show linear convergence rate of \ref{OG} but this rate depends on a problem constant $c$ which can be arbitrarily large. Recently, \citet{cai2022finite} proved a tight $\mathcal{O}(\frac{1}{\sqrt{T}})$ last-iterate convergence rate of \ref{OG} and the extragradient (EG) algortihm for constrained monotone games, matching the lower bound of p-SCIL algorithms by~\citet{golowich2020tight}. We remark that for general gradient-based algorithms, the lower bound is $\Omega(\frac{1}{T})$~\citep{ouyang2021lower,yoon2021accelerated}. 

\vspace{-.1in}
\paragraph{Regret Minimization in Games}
There is a large collection of works on minimizing  individual regret in games, from early results in two-player zero-sum games~\citep{daskalakis2011near,kangarshahi2018let} to more recent works on general-sum games~\citep{syrgkanis2015fast,chen2020hedging, daskalakis2021near-optimal,anagnostides2022near-optimal, anagnostides2022uncoupled}. Among them, \cite{daskalakis2021near-optimal,anagnostides2022near-optimal, anagnostides2022uncoupled} achieves $\mathcal{O}(\log T)$ regret for general-sum games and \cite{hsieh2021adaptive} achieves $\mathcal{O}(1)$ regret for variationally stable games. Little is known, however, for the stronger notion of dynamic regret except for $\mathcal{O}(\sqrt{T})$ bound of \ref{OG} in monotone games~\citep{cai2022finite}.

\vspace{-.1in}
\paragraph{Learning in Repeated Games and Evolutionary Game Theory} Agents in our model could be interpreted as different populations rather than individuals, where the mixed strategy describes the prevalence of each of the pure strategies in the population. Under this interpretation, we no longer have the same players playing the same game every day. Instead, in each round of the repeated game, individuals from one population play the game with other individuals drawn randomly from other populations. Such interpretation has wide application in evolutionary game theory, where repeated games are used to model evolution (see the monograph of \citep{weibull1997evolutionary} and the references therein for more details).

\vspace{-.15in}
\section{Conclusion and Discussion}
In this paper, we propose the first doubly optimal online learning algorithm, the accelerated optimistic gradient \eqref{AOG} algorithm, which achieves optimal $\mathcal{O}(\sqrt{T})$ regret bound in the adversarial setting and optimal $\mathcal{O}(\frac{1}{T})$ last-iterate convergence rate in smooth monotone games. Extending our results in settings where players only receive noisy gradient or even bandit feedback is an interesting and challenging future direction. Finally, We significantly improve the state-of-the-art  upper bound of the individual dynamic regret from $\mathcal{O}(\sqrt{T})$ to $\mathcal{O}(\log{T})$. We believe that understanding the optimal individual dynamic regret is an interesting open question for learning in monotone games.

\vspace{-.1in}
\paragraph{Open Question:}What is the optimal individual dynamic regret achievable in smooth monotone games using no-regret learning algorithms?
\vspace{-.1in}
\subsubsection*{Acknowledgements}
Yang Cai is supported by a Sloan Foundation Research Fellowship and the NSF Award CCF-1942583 (CAREER). We thank the anonymous reviewers for their constructive comments. 
\bibliography{references, ref}
\bibliographystyle{icml2023}

\newpage
\appendix
\onecolumn
\notshow{
\section{Potential Societal Impact}\label{sec:sociatal impact}
This work provides theoretical results for the convergence rate of  online learning algorithms in multi-player games. 
Online learning in multi-player games is a mathematical model that captures the strategic interaction between agents in multi-agent systems. From this perspective, our convergence results provide new  understandings of the evolution of the overall behavior of agents in multi-agent systems. More specifically, our results imply that certain natural dynamics will lead the agents' joint action profile to a stable state, i.e., a Nash equilibrium, efficiently. As a direct application, a designer of a multi-agent system can prescribe the learning algorithms studied in this paper, i.e., accelerated optimistic gradient, to agents, so that the system stabilizes quickly. Moreover, practical applications of min-max optimization (a special case of the games studied in this paper) include Generative Adversarial Networks (GANs) and adversarial examples. Therefore, our results might also provide useful insights on the training of GANs and adversarial training. To our best knowledge, we do not envision any immediate negative societal impacts of our results, such as security, privacy, and fairness issues.}

\section{Missing proofs in Section 3}
\label{app:single-step}
\begin{prevproof}{Lemma}{lem:single-step}
    Let us view the update rule of \ref{AOG} as standard update rule of \ref{OG} with modified gradients $g_{t-\half}- \frac{1}{\eta_t(t+1)}(x_1 - x_t)$ and $g_{t+\half} - \frac{1}{\eta_t(t+1)}(x_1 - x_t)$. Thus by the standard analysis of \ref{OG} (see~\cite{rakhlin2013optimization}[Lemma 1]), we have for any $t\ge 1$ and any $x' \in \X$, 
\begin{align*}
    \InAngles{g_{t+\half}-\frac{1}{\eta_t(t+1)}(x_1 - x_t), x_{t+\half} - x'} &\le \frac{1}{2\eta_t} \InParentheses{\InNorms{x_t- x'}^2 - \InNorms{x_{t+1} - x'}^2} + \InNorms{g_{t+\half} - g_{t-\half}} \cdot \InNorms{x_{t+\half} - x_{t+1}}\\
    & \le  \frac{1}{2\eta_t} \InParentheses{\InNorms{x_t- x'}^2 - \InNorms{x_{t+1} - x'}^2} + \eta_t\InNorms{g_{t+\half} - g_{t-\half}}^2, 
\end{align*}
where in the second inequality we use the following inequality:
\begin{align*}
    \InNorms{x_{t+\half} - x_{t+1}} &\le \InNorms{\Pi_\X\InBrackets{x_t-\eta_t g_{t-\half}- \frac{1}{t+1}(x_1 - x_t)} -  \Pi_\X\InBrackets{x_t-\eta_t g_{t+\half}- \frac{1}{t+1}(x_1 - x_t)} } \\
    & \le \eta_t \InNorms{g_{t+\half} - g_{t-\half} }. \tag{$\Pi_X$ is non-expansive}
\end{align*}
Therefore, we can bound the single-step regret by
\begin{align*}
    \InAngles{g_{t+\half}, x_{t+\half} - x'} &\le  \frac{1}{2\eta_t} \InParentheses{\InNorms{x_t- x'}^2 - \InNorms{x_{t+1} - x'}^2} + \eta_t\InNorms{g_{t+\half} - g_{t-\half}}^2 + \InAngles{\frac{1}{\eta_t(t+1)}(x_1 - x_t), x_{t+\half} - x'} \\ 
    &\le  \frac{1}{2\eta_t} \InParentheses{\InNorms{x_t- x'}^2 - \InNorms{x_{t+1} - x'}^2} + \eta_t\InNorms{g_{t+\half} - g_{t-\half}}^2 + \frac{D^2}{\eta_t(t+1)},
\end{align*}
where in the last inequality we use Cauchy-Schwarz inequality and the fact that $\X$ is bounded by $D$. This completes the proof.
\end{prevproof}

\begin{prevproof}{Theorem}{thm:regret}
     Let $T_1 \ge 2$ be the last time the player uses constant step size $\eta$. By line 7 of \Cref{alg}, we know the the second-order gradient variation $S_{T_1+1} \le S_{T_1} + 2G^2$ is upper bounded by a constant. By telescoping the inequality from \Cref{lem:single-step}, we know that the player's regret up to time $T_1$ is at most
    \begin{align*}
        &\sum_{t=1}^{T_1}\InAngles{g_{t+\half}, x_{t+\half} - x' } \\
        &\le \frac{\InNorms{x_1-x'}^2}{2\eta} + \eta S_{T_1+1} + G^2  + \sum_{t=1}^{T_1} \frac{D^2}{\eta (t+1)} \\
        & \le \mathcal{O}(G^2 + \log T_1). 
    \end{align*}
    Now we consider $t\ge T_1 + 1$ when the player switches to an adaptive step size. Using \Cref{lem:single-step}, for any $T \ge T_1 +1$, we have 
    \begin{align*}
        &\sum_{t=T_1+1}^{T}\InAngles{g_{t+\half}, x_{t+\half} - x'} \\
        &\le \underbrace{\sum_{t=T_1+1}^T  \frac{1}{\eta_t} (\InNorms{x_t - x^*}^2 - \InNorms{x_{t+1}- x^*}^2)}_{\mathbf{I}} \\
        & \quad\quad + \underbrace{\sum_{t=T_1+1}^T \eta_t \InNorms{g_{t+\half} - g_{t-\half}}^2}_{\mathbf{II}} + \underbrace{\sum_{t=T_1+1}^T \frac{D^2}{\eta_t (t+1)}}_{\mathbf{III}}.
    \end{align*}
    Since for any $t\ge 1$, $\InNorms{g_{t+\half}-g_{t-\half}}^2 \le2\InNorms{g_{t+\half}}^2+2\InNorms{g_{t-\half}}^2 \le 4G^2$. We have  $S_t \le 4G^2t$ and $\eta_t = \frac{1}{\sqrt{1+S_t}} \ge \frac{1}{2G\sqrt{t}}$ for any $t\ge T_1+1$. We now proceed to bound each terms as follows.
    \begin{align*}
        \mathbf{I} \le \frac{D^2}{\eta_{T_1+1}} + \sum_{t=T_1+2}^T D^2 \InParentheses{\frac{1}{\eta_t} - \frac{1}{\eta_{t-1}}} \le \frac{D^2}{\eta_T} \le \mathcal{O}(D^2G\sqrt{T}).
    \end{align*}
    \begin{align*}
        \mathbf{II} &= \sum_{t=T_1+1}^T \InParentheses{ \eta_{t+1} + \eta_t - \eta_{t+1} }\InNorms{g_{t+\half} - g_{t-\half}}^2 \\
        & \le \sum_{t=T_1+1}^T \InParentheses{ \frac{  \InNorms{g_{t+\half} - g_{t-\half}}^2 }{ \sqrt{1 + S_{t+1} }} + 4G^2(\eta_t - \eta_{t+1})}\\
        & \le \sum_{t=T_1+1}^T  \frac{  (\sqrt{1+S_{t+1}}-\sqrt{1+S_t})(\sqrt{1+S_{t+1}}+\sqrt{1+S_t}) }{ \sqrt{1 + S_{t+1} }} + 4G^2\\
        & \le \sum_{t=T_1+1}^T   2(\sqrt{1+S_{t+1}}-\sqrt{1+S_t}) + 4G^2\\
        & \le  2\sqrt{1+\sum_{t=1}^{T} \InNorms{g_{t+\half} -g_{t-\half} }^2} + 4G^2 = \mathcal{O}(G\sqrt{T} + G^2).
    \end{align*}
    \begin{align*}
        \mathbf{III} & \le D^2 \sum_{i=1}^t \frac{\sqrt{1+S_t}}{t+1} \le D^2 \sum_{t=1}^T \mathcal{O}(\frac{G}{\sqrt{t}}) = \mathcal{O}(D^2G\sqrt{T}).
    \end{align*}
    Combing the above inequalities, we get the regret between $T_1$ and $T$ is at most $\mathcal{O}(D^2G\sqrt{T} + G^2)$.
\end{prevproof}

\section{Missing proofs in \Cref{sec:last-iterate rate}}
\label{app:last-iterate}

\begin{prevproof}{Proposition}{prop:P2}
    Note that $\boldx_{3/2} = \boldx_1$ and $\eta c_2 = \boldx_1 - \eta V(\boldx_1) - \boldx_2$. Thus 
    \begin{align*}
        \InNorms{\eta V(\boldx_2) + \eta c_2} &= \InNorms{\eta V(\boldx_2) + \boldx_1 - \eta V(\boldx_1) - \boldx_2}\\
        & \le \eta \InNorms{V(\boldx_2) - V(\boldx_1)} + \InNorms{\boldx_1 - \boldx_2} \\
        & \le (1+\eta L) \InNorms{\boldx_1 - \boldx_2}\tag{$V$ is $L$-Lipschitz} \\
        &\le \frac{3D}{2}. \tag{$\eta L \le \frac{1}{2}$} 
    \end{align*}
    Using the above inequality, we can bound $P_2$ as follows:
    \begin{align*}
         P_2 &=  3\InParentheses{ \InNorms{\eta V(\boldx_2)+\eta c_2}^2 + \InNorms{\eta V(\boldx_2) - \eta V(\boldx_{1})}^2} + 2 \InAngles{\eta V(\boldx_2) + \eta c_2, \boldx_2 - \boldx_1} \\
         &\le 3\InParentheses{ \InNorms{\eta V(\boldx_2)+\eta c_2}^2 + \eta L \InNorms{\boldx_2 - \boldx_1}^2} + 2 \InNorms{\eta V(\boldx_2) + \eta c_2}\InNorms{\boldx_2 - \boldx_1}  \\
         &\le 3\InParentheses{ \frac{9D^2}{4} + \frac{D^2}{4}} + 3D^2\tag{$\eta L \le \frac{1}{2}$} \\
         & = \frac{33D^2}{4} \le 9D^2. 
    \end{align*}
    This completes the proof of \Cref{prop:P2}.
\end{prevproof}

\begin{prevproof}{Lemma}{lem:residual}
    Fix any $t \ge 2$. Using triangle inequality, we have
    \begin{align*}
        \InNorms{\boldx_{t+\half} - \boldx_{t}} \le \InNorms{\boldx_{t+\half} - \Pi_\X\InBrackets{ \boldx_t - \eta V(\boldx_t)}} +  \InNorms{\Pi_\X\InBrackets{\boldx_t - \eta V(\boldx_t)}-\boldx_{t}}.
    \end{align*}
    We can bound the first term as follows:
    \begin{align*}
        \InNorms{\boldx_{t+\half} - \Pi_\X\InBrackets{ \boldx_t - \eta V(\boldx_t)}} &= \InNorms{\Pi_\X\InBrackets{\boldx_t - \eta V(\boldx_{t-\half}) + \frac{1}{t+1}(\boldx_1-\boldx_t)} - \Pi_\X\InBrackets{ \boldx_t - \eta V(\boldx_t)}} \\
        & \le \InNorms{\eta V(\boldx_t) - \eta V(\boldx_{t-\half}) + \frac{1}{t+1}(\boldx_1 -\boldx_t)} \tag{$\Pi_X$ is non-expansive} \\
        & \le \InNorms{\eta V(\boldx_t) - \eta V(\boldx_{t-\half})}  + \frac{\InNorms{\boldx_1 - \boldx_t}}{t+1} \\
        & \le \frac{14D}{t}.  \tag{\Cref{lem:ls-tangent}}
    \end{align*}
    Since $c_t \in N_\X(\boldx_t)$, we have $\boldx_t = \Pi_\X\InBrackets{\boldx_t + \eta c_t}$. Using this fact we can bound the second term: 
    \begin{align*}
        \InNorms{\Pi_\X\InBrackets{\boldx_t - \eta V(\boldx_t)}-\boldx_{t}} &= \InNorms{\Pi_\X\InBrackets{\boldx_t - \eta V(\boldx_t)}-\Pi_\X\InBrackets{\boldx_t + \eta c_t}} \\
        &\le \InNorms{\eta V(\boldx_t) + \eta c_t} \tag{$\Pi_X$ is non-expansive} \\
        & \le \frac{13D}{t}. \tag{\Cref{lem:ls-tangent}} 
    \end{align*}
    Combing the above inequalities, we have $\InNorms{\boldx_{t+\half} - \boldx_{t}} \le \frac{27D}{t}$. This completes the proof of \Cref{lem:residual}.
\end{prevproof}

\section{Last-Iterate Convergence Rate without the Boundedness Assumption}\label{app:unbounded domain}
Recall that we prove $r^{tan}(\boldx_{T+\half}) \le \frac{55 D}{\eta T}$ for all $T \ge 2$ in \Cref{thm:last-iterate rate} with the assumption that the action set $\X$ is bounded by $D > 0$. In this section, we prove  last-iterate convergence rate of \ref{AOG}, which is similar to \Cref{thm:last-iterate rate} but without the boundedness assumption on $\X$. 

\begin{theorem}
\label{thm:last-iterate rate-unbounded domain}
Let $\mathcal{G}=\{N,(\X^i)_{i \in [N]},(\ell^i)_{i\in[N]}\}$ be a $L$-smooth monotone game, where each player $i$'s action set $\X^i \subseteq \R^{n_i}$ is convex and closed, but not necessarily compact. When all players employ \ref{AOG} with a constant step size $\eta \le \frac{1}{\sqrt{6}L}$ in $\mathcal{G}$, then for any $T \ge 2$, we have
\[
    r^{tan}(\boldx_{T+\half}) \le \frac{1430H}{\eta T},
\]
where $H = \max\{ \eta \cdot r^{tan}(\boldx_1),\InNorms{\boldx_1 - \boldx_\star}\}$ with $\boldx_\star$ being an Nash equilibrium of $\mathcal{G}$.
\end{theorem}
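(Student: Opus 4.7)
The plan is to replay the proof of \Cref{thm:last-iterate rate} with the diameter $D$ replaced throughout by a constant multiple of $H$. Scanning the bounded proof, $\mathrm{diam}(\X) \le D$ is invoked in exactly three places: (i) to obtain $P_2 \le 9 D^2$ in \Cref{prop:P2}, which bounds $\InNorms{\boldx_1 - \boldx_2} \le D$; (ii) to estimate $\InNorms{\boldx_\star - \boldx_1}^2 \le D^2$ inside the lower bound of $P_t$ in the proof of \Cref{lem:ls-tangent}; and (iii) to control $\frac{\InNorms{\boldx_1 - \boldx_t}}{t+1} \le \frac{D}{t+1}$ in \Cref{lem:residual} and in the concluding display of \Cref{thm:last-iterate rate}. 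Fixing (i) and (ii) is routine from the definition of $H$; fixing (iii) is the main obstacle, since it requires an a priori boundedness statement for the iterates.

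First I would re-bound $P_2$. Since $\boldx_{3/2} = \boldx_1$, the \ref{AOG} update gives $\boldx_2 = \Pi_\X[\boldx_1 - \eta V(\boldx_1)]$. Let $c_1^\star \in N_\X(\boldx_1)$ attain $\InNorms{V(\boldx_1) + c_1^\star} = r^{tan}(\boldx_1)$, so that $\boldx_1 = \Pi_\X[\boldx_1 + \eta c_1^\star]$. Non-expansiveness of $\Pi_\X$ yields $\InNorms{\boldx_1 - \boldx_2} \le \eta\,r^{tan}(\boldx_1) \le H$, and the calculation of \Cref{prop:P2} then gives $P_2 \le O(H^2)$. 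Next I would adapt \Cref{lem:ls-tangent}: its lower bound on $P_t$ uses only monotonicity of $V$ and the normal-cone inequality $\InAngles{c_t,\boldx_t-\boldx_\star}\ge 0$, neither of which requires boundedness, so substituting $\InNorms{\boldx_\star-\boldx_1}^2 \le H^2$ and $P_2 \le O(H^2)$ and running the same telescoping argument via \Cref{lem:potential} and \Cref{prop:sequence analysis} yields, for all $t\ge 2$,
\[
\InNorms{\eta V(\boldx_t) + \eta c_t} \le O(H/t)
\quad\text{and}\quad
\InNorms{\eta V(\boldx_t) - \eta V(\boldx_{t-\half})} \le O(H/t).
\]

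The main obstacle is to show $\InNorms{\boldx_1 - \boldx_t} = O(H)$ without a diameter assumption. The plan is to extract a clean recursion for $v_t := \boldx_t - \boldx_1$ directly from the definition of $c_t$: the update rule gives $\eta V(\boldx_{t-\half}) + \eta c_t = \boldx_{t-1} - \boldx_t + \frac{1}{t}(\boldx_1 - \boldx_{t-1})$, which rearranges to $v_t = \frac{t-1}{t} v_{t-1} - \bigl(\eta V(\boldx_{t-\half}) + \eta c_t\bigr)$. Multiplying by $t$ and telescoping from $s=2$ to $t$ (with $v_1 = 0$) produces $t\, v_t = -\sum_{s=2}^{t} s \bigl(\eta V(\boldx_{s-\half}) + \eta c_s\bigr)$. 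Using the triangle inequality together with the already-established bound $\InNorms{\eta V(\boldx_{s-\half}) + \eta c_s} \le \InNorms{\eta V(\boldx_s) + \eta c_s} + \InNorms{\eta V(\boldx_s) - \eta V(\boldx_{s-\half})} \le O(H/s)$ collapses the weighting $s\cdot (H/s)$ and gives $\InNorms{v_t} \le \frac{1}{t}\sum_{s=2}^{t} O(H) = O(H)$.

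With $\InNorms{\boldx_1 - \boldx_t} = O(H)$ in hand, the proof of \Cref{lem:residual} carries over verbatim to give $\InNorms{\boldx_{t+\half} - \boldx_t} \le O(H/t)$, and the final three-term bound in the proof of \Cref{thm:last-iterate rate},
\[
r^{tan}(\boldx_{t+\half}) \le \InNorms{V(\boldx_t) - V(\boldx_{t-\half})} + \tfrac{1+\eta L}{\eta}\InNorms{\boldx_{t+\half} - \boldx_t} + \tfrac{\InNorms{\boldx_1 - \boldx_t}}{\eta(t+1)},
\]
becomes $r^{tan}(\boldx_{T+\half}) \le O(H/(\eta T))$. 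Carefully tracking the inflation factors at each stage — $P_2$, the coefficient in the lower bound of $P_t$, the $\sum s\cdot(H/s)$ telescoping, and the three pieces of the residual inequality — produces the stated numerical constant $1430$, completing the argument.
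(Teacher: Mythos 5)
Your proof is correct and follows essentially the same route as the paper: bound $\|\boldx_1-\boldx_2\|\le\eta\,r^{tan}(\boldx_1)\le H$ via non-expansiveness, replace $D$ by $H$ in \Cref{prop:P2} and \Cref{lem:ls-tangent} to get $\|\eta V(\boldx_t)+\eta c_t\|$ and $\|\eta V(\boldx_t)-\eta V(\boldx_{t-\half})\|$ both $O(H/t)$, then use the identity $\boldx_{t+1}-\boldx_1=\frac{t}{t+1}(\boldx_t-\boldx_1)-\eta\bigl(V(\boldx_{t+\half})+c_{t+1}\bigr)$ to show $\|\boldx_t-\boldx_1\|\le 26H$, and finally rerun \Cref{lem:residual} and the end of \Cref{thm:last-iterate rate} with $D$ replaced by $26H$. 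Your only (minor, and arguably cleaner) deviation is in the iterate-boundedness step: you multiply the vector recursion by $t$ and telescope $t\,v_t=(t-1)v_{t-1}-t\bigl(\eta V(\boldx_{t-\half})+\eta c_t\bigr)$ directly with the triangle inequality, whereas the paper squares the identity, applies Young's inequality, and telescopes $t\|\boldx_t-\boldx_1\|^2$ instead — both yield the same $26H$ constant and hence $1430=55\times 26$.
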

\notshow{
Specifically, denote $H = \max\{ \eta \cdot r^{tan}(\boldx_1),\InNorms{\boldx_1 - \boldx_\star}\}$, we are going to prove that for any $T \ge 2$, 
\[ r^{tan}(\boldx_{T+\half}) \le \frac{55 \times 26 H}{ \eta T} = \frac{1430 H}{\eta T}.\]
\yangnote{Yang: could you add the modified statement of \Cref{thm:last-iterate rate} here in the theorem environment?}}
\begin{proof} We will go through the proof of the first part of \Cref{thm:last-iterate rate} (including \Cref{prop:P2}, \Cref{lem:ls-tangent}, and \Cref{lem:residual}), check every application of the boundedness assumption, and give upper bound on these terms using $H$. 

In the proof of \Cref{prop:P2}, the boundedness assumption is used to bound $\InNorms{\boldx_1 - \boldx_2}$. Here we show that it is upper bounded by $H$. Let $c \in N_\X(\boldx_1)$ be any vector in the normal cone $N_\X(\boldx)$. Then we have \[ \InNorms{\boldx_1 - \boldx_2} \le \InNorms{ \Pi_\X [\boldx_1 - \eta c] } - \Pi_\X[\boldx_1 - \eta V(\boldx_1)] \le \eta \InNorms{V(\boldx_1) + c}.\] Thus $\InNorms{\boldx_1 - \boldx_2} \le \eta \cdot r^{tan}(\boldx_1) \le H$. 

In the proof of \Cref{lem:ls-tangent}, the boundedness assumption is applied to bound $\InNorms{\boldx_1 - \boldx_\star}$, which is clearly upper bounded by $H$. Combing the above two observations, we get a modified version of \Cref{lem:ls-tangent} (replacing $D$ with $H$): for all $t \ge 2$, 
\[
    \InNorms{V(\boldx_t) + c_t} \le \frac{13H}{\eta t}, \quad \InNorms{V(\boldx_t) - V(\boldx_{t- \half})} \le \frac{13H}{\eta t}.
\]
Using triangle inequality, this further implies that for all $t \ge 1$,
\[ 
\InNorms{V(\boldx_{t+\half}) + c_{t+1}} \le \InNorms{V(\boldx_{t+1}) + c_{t+1}} + \InNorms{V(\boldx_{t+1}) - V(\boldx_{t+\half})} \le \frac{26 H}{\eta (t+1)}.
\]

In the proof of \Cref{lem:residual} and the remaining proof of \Cref{thm:last-iterate rate}, the boundedness assumption is applied to $\{ \InNorms{\boldx_1 - \boldx_t}\}_{t \in [T]}$. We show how to bound $\InNorms{\boldx_1 - \boldx_t}$ for $t \ge 3$ as follows. 

\paragraph{Bounding $\InNorms{\boldx_1 - \boldx_t}$}
Using the update rule and the definition of $c_{t+1}$, we have the following identity: \[\InNorms{\boldx_{t+1} - \boldx_1}^2 = \InNorms{\frac{t}{t+1}(\boldx_t - \boldx_1) - \eta (V(\boldx_{t+1/2}) + c_{t+1})}^2, \quad \forall t \ge 2.\] Thus $\InNorms{\boldx_{t+1} - \boldx_1}^2$ is upper bounded by $\frac{t^2}{(t+1)^2}(1+\frac{1}{t})\InNorms{\boldx_t - \boldx_1}^2 + (1+t) \InNorms{\eta (V(\boldx_{t+1/2}) + c_{t+1})}^2$ using Young's inequality. Recall that we just get $\InNorms{V(\boldx_{t+1/2}) + c_{t+1}}\le \frac{26 H}{\eta (t+1)}$ using the modified Lemma 4. Combing the above inequalities gives $\InNorms{\boldx_{t+1} - \boldx_1}^2 \le \frac{t}{t+1} \InNorms{\boldx_t - \boldx_1}^2 + \frac{26^2 H^2}{t+1}$, which is equivalent to $(t+1)\InNorms{\boldx_{t+1} - \boldx_1}^2 \le t\InNorms{\boldx_t - \boldx_1}^2 + 26^2 H^2$. Telescoping the above inequality gives $\InNorms{\boldx_t - \boldx_1}^2 \le  \frac{2 \InNorms{\boldx_2 - \boldx_1}^2 + 26^2 H^2 (t-2)}{t} \le 26^2 H^2$ for all $t \ge 3$. Thus $\InNorms{\boldx_t - \boldx_1} \le 26 H$ for all $t \ge 3$. 

Now we have upper bounded every terms where the boundedness assumption is applied in the proof of the first part of \Cref{thm:last-iterate rate} by $26 H$. Replacing $D$ with $26H$ in the first part of \Cref{thm:last-iterate rate} completes the proof.
\end{proof}

\section{Proof of \Cref{thm:second-order gradient variation}}\label{app:second-order}
\begin{prevproof}{Theorem}{thm:second-order gradient variation}
    In the game setting, player $i$'s second-order gradient variation is $S_T^i=\sum_{t=2}^T  \InNorms{V^i(\boldx_{t+\half})-V^i(\boldx_{t-\half})}^2$. Using \Cref{lem:ls-tangent} and \Cref{lem:residual}, we have
    \begin{align*}
        \InNorms{V^i(\boldx_{t+\half})-V^i(\boldx_{t-\half})}^2 &\le \InNorms{V(\boldx_{t+\half})-V(\boldx_{t-\half})}^2 \\
        &\le 2L^2 \InNorms{\boldx_{t+\half} - \boldx_t}^2 + 2\InNorms{V(\boldx_t) -V(\boldx_{t-\half})}^2 \tag{$L$-Lipschitzness of $V$} \\
        &\le \frac{2L^2 \cdot 27^2 D^2}{t^2} + \frac{2\cdot 13^2 D^2}{\eta^2 t^2} \\
        & = \frac{(1458L^2 + \frac{338}{\eta^2}) D^2}{t^2}.
    \end{align*}
    For a choice of $\eta = \frac{1}{3L}$, we have 
    $$\InNorms{V^i(\boldx_{t+\half})-V^i(\boldx_{t-\half})}^2 \le \frac{4500D^2L^2}{t^2}$$ and $S^i_T \le 4500\pi D^2 L^2$.
\end{prevproof}

\section{Linear Regret of EAG}
\label{app:EAG}
In this section, we review the definition of the Extra Anchored Gradient \eqref{EAG} algorithm and show that it is not a no-regret algorithm when implemented it in the online learning setting. The proof is similar to the linear regret proof of EG~\cite{golowich2020tight} and we include it for completeness. Given a game $\mathcal{G}$ with gradient operator $V$, initial point $x_1 \in \X $, the Extra Anchored Gradient algorithm updates as follows:
\begin{equation}
\label{EAG}
\tag{EAG}
\begin{aligned}
    x_{t+\half} &= \Pi_\X \InBrackets{x_t - \eta V(x_t) + \frac{1}{t+1}(x_1 - x_t)}, \\
    x_{t+1}     &= \Pi_\X \InBrackets{x_t - \eta V(x_{t+\half}) +\frac{1}{t+1}(x_1 - x_t)}.
\end{aligned}
\end{equation}
The key difference of \ref{EAG} compared to \ref{AOG} is that in one iteration, the update of \ref{EAG} requires two gradients $V(x_t)$ and $V(x_{t+\half})$. Since in online learning setting, players only see the gradients corresponding to the action they play, players must play both $x_t$ and $x_{t+\half}$ using \ref{EAG}. Thus to implement \ref{EAG} in standard online learning setting, we need two iterations for each iteration of \ref{EAG}. Specifically, each player $i$ plays $y^i_t$ for $t \ge 1$, while $y^i_{2t-1} = x^i_t$ and $y^i_{2t} = x^i_{t+\half}$. The corresponding update is for $t \ge 1$,
\begin{align}
    y^i_{2t} &= \Pi_{\X^i} \InBrackets{y^i_{2t-1} - \eta V^i(y_{2t-1}) + \frac{1}{t+1}(y^i_1 - y^i_{2t-1})}, \label{EAG-1} \\
     y^i_{2t+1} &= \Pi_{\X^i} \InBrackets{y^i_{2t-1} - \eta V^i(y_{2t}) + \frac{1}{t+1}(y^i_1 - y^i_{2t-1})}. \label{EAG-2}
\end{align}
We will show when the other players' action $y^{-i}_t$ is adversarial, \ref{EAG} has linear regret and is not no-regret. 
\begin{proposition}
    There exits a two-player zero-sum $1$-smooth game $\mathcal{G} = ([2], \{\X_1, \X_2\}, (f,-f))$, such that for an adversarial choice of $(y^2_t)_{t \in [T]}$, the \ref{EAG} updates \eqref{EAG-1} and \eqref{EAG-2} for the first player has $\Omega(T)$ regret for any $T \ge 1$.  
\end{proposition}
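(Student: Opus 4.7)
My plan is to instantiate the bilinear zero-sum game on $\X_1 = \X_2 = [-1, 1]$ with $f(x, y) = xy$, whose gradient operator $V(x, y) = (y, -x)$ is $1$-Lipschitz and monotone. I will initialize the first player at $y^1_1 = 0$ and let the adversary pick $y^2_{2t-1} = +1$ and $y^2_{2t} = -1$ for every $t \ge 1$. The goal is to show that this choice forces the \ref{EAG} iterates to concede a constant loss of order $\eta$ per pair of online rounds, while the best fixed action in hindsight pays only $O(1)$, yielding $\Omega(T)$ regret for any fixed step size $\eta > 0$.

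Setting $u_t := y^1_{2t-1}$ and substituting the adversarial sequence into \eqref{EAG-1}--\eqref{EAG-2}, the one-dimensional unprojected dynamics become
\begin{align*}
    y^1_{2t} &= u_t - \eta - \tfrac{u_t}{t+1}, \\
    u_{t+1} &= u_t + \eta - \tfrac{u_t}{t+1} = \tfrac{t}{t+1}\, u_t + \eta.
\end{align*}
Multiplying the recursion by $t+1$ telescopes to $u_{t+1} = \tfrac{\eta(t+2)}{2} - \tfrac{\eta}{t+1}$, so (for $\eta$ in the standard range, e.g.\ $\eta \le 1/2$) $u_t$ grows monotonically from $0$ toward the upper boundary $+1$ and eventually saturates there. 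A short induction shows that $u_t \in [0, 1]$ throughout, that once $u_t$ reaches $1$ the projection pins $u_{t+1} = 1$, and that $y^1_{2t} = u_t \cdot \tfrac{t}{t+1} - \eta$ always lies strictly inside $[-1, 1]$, so projection never truncates the look-ahead iterate.

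The key computation is the two-round loss for player 1, which works uniformly in both the transient and saturated regimes:
\[
    y^1_{2t-1}\cdot (+1) + y^1_{2t}\cdot (-1) = u_t - \Bigl(u_t - \eta - \tfrac{u_t}{t+1}\Bigr) = \eta + \tfrac{u_t}{t+1} \ge \eta,
\]
since $u_t \ge 0$. Summing over the $\lfloor T/2 \rfloor$ pairs yields a cumulative player loss of at least $\eta \lfloor T/2 \rfloor$. By construction $\bigl|\sum_{t=1}^T y^2_t\bigr| \le 1$, so the best fixed action satisfies $\min_{x \in [-1, 1]} \sum_t x\cdot y^2_t \ge -1$. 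Combining the bounds, $\mathrm{Reg}(T) \ge \eta \lfloor T/2 \rfloor - 1 = \Omega(T)$ for every fixed $\eta > 0$.

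The main technical obstacle I anticipate is the projection bookkeeping. I need to verify carefully that $y^1_{2t}$ never leaves $[-1, 1]$ (so the declared gradient $y^2_{2t} = V^1(y_{2t})$ is observed at a feasible iterate) and that the per-pair loss inequality $\ge \eta$ remains valid after $u_t$ saturates, where the unprojected update would overshoot $+1$. Both are elementary in one dimension, but they must be executed precisely, since the entire $\Omega(T)$ bound hinges on the per-pair loss being at least $\eta$, not merely $\eta - o(1)$.
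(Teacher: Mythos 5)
Your construction uses the same bilinear game on $[-1,1]^2$ but a different adversarial sequence: the paper plays $(y^2_t) = (1, 0, 1, 0, \ldots)$, whereas you play $(y^2_t) = (1, -1, 1, -1, \ldots)$. The paper's choice is tailored so that player~$1$'s \ref{EAG} iterates are \emph{stationary} ($y^1_{2t-1}=0$ and $y^1_{2t}=-\eta$ for all $t$), which makes the verification a two-line computation and yields a step-size-independent lower bound of $T/2$. Your choice produces genuinely drifting iterates $u_t = y^1_{2t-1}$ that march monotonically from $0$ toward the boundary, forcing you to track the dynamics and do projection bookkeeping, and yields the weaker (though still $\Omega(T)$) bound $\eta\lfloor T/2\rfloor - 1$. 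Your argument is correct, but two remarks: (i) the intermediate claim that ``once $u_t$ reaches $1$ the projection pins $u_{t+1}=1$'' is overstated — it requires $\eta \ge \tfrac{1}{t+1}$, which holds only for large enough $t$ — but this does not matter, since the per-pair loss bound $u_t - y^1_{2t} \ge \eta$ only uses $u_t \ge 0$, which is preserved by the recursion; (ii) the closed-form $u_{t+1} = \tfrac{\eta(t+2)}{2} - \tfrac{\eta}{t+1}$ is valid only before the first projection event and is not needed for the proof. Overall: a valid alternative adversarial sequence, but the paper's is cleaner precisely because it freezes the iterates in place.
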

\begin{proof}
We use exactly the same construction as~\cite{golowich2020tight}[Proposition 10]. We take $\X^1 = \X^2 = [-1,1]$ and $f: \X \rightarrow \R$ to be $f(y^1,y^2) = y^1 \cdot y^2$. Player 2 play the following sequence of actions:
\begin{align*}
   y^2_t = \begin{cases}
        1  & \text{$t$ is odd}   \\
        0  & \text{$t$ is even}
    \end{cases} 
\end{align*}
Then for any $t \ge 1$, we have 
\begin{align*}
    V^1(y_{2t-1}) =  y^2_{2t-1} = 1,\\
    V^1(y_{2t}) =  y^2_{2t} = 0.
\end{align*}
Suppose $y^1_1 = 0$. Then we have $y^1_{2t-1} = 0$ and $y^1_{2t} = \max\{-\eta, -1\}$ for any $t \ge 1$.  Thus the accumulative loss for player 1 until $T \ge 1$ round is $\sum_{t=1}^T f(y^1_t, y^2_t) = 0$. However, the accumulative loss of action $y^1 = -1$ is only $\sum_{t=1}^T f(-1, y^2_t) \le -\frac{T}{2}$. Thus the regret is at least $\frac{T}{2} = \Omega(T)$
\end{proof}

\section{Details on Numerical Experiments}
\label{app:experiment}
We choose 
\begin{align*}
    A = \frac{1}{4}\begin{bmatrix} 
    & & &-1 &1 \\
    & & \cdots & \cdots& \\
    & -1& 1&   &  \\
    -1& 1& &   & \\
    1& & &   & 
    \end{bmatrix} \in \R^{n \times n}, \quad b = \frac{1}{4} \begin{bmatrix}
        1\\
        1\\
        \cdots \\
        1\\
        1
    \end{bmatrix} \in \R^{n}, \quad h = \frac{1}{4} \begin{bmatrix}
        0\\
        0\\
        \cdots \\
        0\\
        1
    \end{bmatrix} \in \R^{n},
\end{align*}
and $H = 2A^\top A$. As shown in \citep{ouyang2021lower}, $\InNorms{A} \le \frac{1}{2}$ and $\InNorms{H} \le \frac{1}{2}$ which implies $f = \frac{1}{2}x^\top Hx - h^\top x - \InAngles{Ax -b, y}$ is $1$-smooth.  We choose $n = 100$, $\X=\Y =[-200,200]^n$.  We run both \ref{AOG} and \ref{OG} with step size $\eta = 0.3$ and initial points $x_1 = y_1 = \frac{1}{n} \boldsymbol{1}$ for $10^5$ iterations. The code can be found at \url{https://github.com/weiqiangzheng1999/Doubly-Optimal-No-Regret-Learning}.

\section{Auxiliary Results}
\label{app:auxiliary}

\notshow{
The following Lemma follows from \cite{auer2002adaptive}[Lemma 3.5].
\begin{lemma}\label{lem:sequnce}
    For any non-negative real numbers $v_1, v_2, \cdots, v_T$, it holds
    \begin{align*}
        \sum_{t=1}^T \frac{v_t}{\sqrt{1+\sum_{i=1}^t v_i}} \le 2\sqrt{\sum_{t=1}^T v_t}.
    \end{align*}
\end{lemma}}

\begin{proposition}
\label{prop:identity}
    In the setup of \Cref{lem:potential}, the following identity holds.
    \begin{align*}
        &P_t - P_{t+1} - t(t+1) \cdot \text{LHSI}~\eqref{eq:potential-1} - \frac{t(t+1)}{4q} \cdot \text{LHSI}~\eqref{eq:potential-2} \nonumber \\
        &\quad - t(t+1) \cdot \text{LHSI}~\eqref{eq:potential-3} - \frac{t(t+1)}{2} \cdot \InParentheses{\text{LHSI}~\eqref{eq:potential-4} + \text{LHSI}~\eqref{eq:potential-5} + \text{LHSI}~\eqref{eq:potential-6}} \nonumber \\
        & = \frac{t(t+1)}{2} \InNorms{\frac{\boldx_{t+\half} -x_t}{2} + \eta V(\boldx_t) - \eta V(\boldx_{t+\half})}^2 \nonumber \\
        & \quad + \frac{(1-4q)t-4q}{4q} (t+1) \InNorms{\eta V(\boldx_{t+\half}) - \eta V(\boldx_{t+1})}^2  \\
        & \quad + (t+1) \cdot \InAngles{\eta V(\boldx_{t+\half}) - \eta V(\boldx_{t+1}), \eta V(\boldx_{t+1}) + \eta c_{t+1}}.
    \end{align*}
\end{proposition}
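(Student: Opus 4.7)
\textbf{Proof plan for Proposition~\ref{prop:identity}.} The statement is a pure algebraic identity: the left-hand side is $P_t - P_{t+1}$ minus explicit bilinear/quadratic combinations of the vectors $\boldx_{t+\half} - \boldx_t$, $\boldx_{t+1} - \boldx_t$, $\eta V(\boldx_{t+1}) - \eta V(\boldx_t)$, $\eta c_{t+1}$, $\eta c_t$, $\eta V(\boldx_{t+\half}) - \eta V(\boldx_{t+1})$, and the anchor term $\frac{\boldx_1 - \boldx_t}{t+1}$. So the plan is to expand everything in sight and verify that the two sides coincide as formal polynomial identities in these vectors. No analytic facts (monotonicity, Lipschitzness, optimality of $\boldx_\star$) enter the identity itself; those are used only later when dropping the LHSI terms in Lemma~\ref{lem:potential}.

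The first step is to write $P_t - P_{t+1}$ using the definition
\[
P_t = \tfrac{t(t+1)}{2}\!\left(\|\eta V(\boldx_t)+\eta c_t\|^2 + \|\eta V(\boldx_t)-\eta V(\boldx_{t-\half})\|^2\right) + t\langle \eta V(\boldx_t)+\eta c_t,\, \boldx_t - \boldx_1\rangle.
\]
I would split $P_t - P_{t+1}$ into three natural pieces: the difference of the two squared norms scaled by $\tfrac{t(t+1)}{2}$ vs.\ $\tfrac{(t+1)(t+2)}{2}$, and the difference of the linear terms with coefficients $t$ vs.\ $t+1$. A standard polarization trick rewrites each $\|a\|^2 - \|b\|^2$ as $\langle a-b,a+b\rangle$; this turns the norm differences into inner products involving $\eta V(\boldx_{t+1})+\eta c_{t+1} - \eta V(\boldx_t) - \eta c_t$ and similar gradient differences.

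Next I would substitute the LHSI expressions on the left-hand side. Each LHSI is an inner product that, after multiplication by the stated integer weight, contributes cross-terms of the same shape as those produced above. The key to bookkeeping is to use the update rule of \ref{AOG}, which gives the crucial relation
\[
\eta V(\boldx_{t+\half}) + (\boldx_{t+1} - \boldx_t) - \tfrac{\boldx_1-\boldx_t}{t+1} = -\eta c_{t+1} + \bigl(\eta V(\boldx_{t+\half})-\eta V(\boldx_{t+1})\bigr) + \bigl(\eta V(\boldx_{t+1})+\eta c_{t+1}\bigr),
\]
and an analogous expression coming from LHSI~\eqref{eq:potential-6}. Using these to replace $\boldx_{t+1}-\boldx_t$ and $\boldx_{t+\half}-\boldx_{t+1}$ everywhere converts every term into a combination of $\eta V(\boldx_t)+\eta c_t$, $\eta V(\boldx_{t+1})+\eta c_{t+1}$, $\eta V(\boldx_t)-\eta V(\boldx_{t+\half})$, $\eta V(\boldx_{t+\half})-\eta V(\boldx_{t+1})$, and $\frac{\boldx_1-\boldx_t}{t+1}$.

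Finally I would complete the square, grouping the terms that match the first RHS square $\bigl\|\tfrac{\boldx_{t+\half}-\boldx_{t+1}}{2} + \eta V(\boldx_t)-\eta V(\boldx_{t+\half})\bigr\|^2$ and the second RHS square, then reading off the leftover quadratic $\frac{(1-4q)t-4q}{4q}(t+1)\|\eta V(\boldx_{t+\half})-\eta V(\boldx_{t+1})\|^2$ and the leftover cross term $(t+1)\langle \eta V(\boldx_{t+\half})-\eta V(\boldx_{t+1}),\,\eta V(\boldx_{t+1})+\eta c_{t+1}\rangle$. The main obstacle is purely organizational: the expansion produces on the order of twenty cross-terms with different coefficients, and the nontrivial check is that the $\tfrac{t(t+1)}{4q}$ factor from LHSI~\eqref{eq:potential-2} combines with the telescoping of $\|\eta V(\boldx_t)-\eta V(\boldx_{t-\half})\|^2$ inside $P_t-P_{t+1}$ to yield exactly the coefficient $\tfrac{(1-4q)t-4q}{4q}(t+1)$ in front of $\|\eta V(\boldx_{t+\half})-\eta V(\boldx_{t+1})\|^2$; similarly, the coefficients in front of the anchor term $\tfrac{\boldx_1-\boldx_t}{t+1}$ must cancel, which forces the particular choice of weight $t(t+1)$ on LHSI~\eqref{eq:potential-3}. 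Once these two coefficient matches are verified, the remaining terms reassemble into the two complete squares on the RHS, completing the identity.
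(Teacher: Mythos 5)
Your outline correctly reads the proposition as a pure algebraic identity to be verified by expansion, and the general strategy (polarize the norm differences, substitute the update rule, complete the square) is sound in spirit. However, it differs from the paper's actual proof and also contains a concrete error.

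\emph{Methodological difference.} The paper does not expand by hand. It restates the identity in abstract variables $a_0,\dots,a_4,b_1,\dots,b_4,u_2,u_4$ (where $a_k = \boldx_{t-1+k/2}$, $b_k = \eta V(\boldx_{t-1+k/2})$, $u_2 = \eta c_t$, $u_4 = \eta c_{t+1}$, with the constraint $a_4 = a_2 - b_3 + \frac{a_0-a_2}{t+1} - u_4$), and then verifies that abstract identity symbolically with \textsc{MATLAB}. Your plan replaces this software verification by a manual expansion; both are legitimate, but yours remains a plan -- you do not actually carry out the roughly twenty coefficient checks you yourself say are required, so as written it is not a complete proof.

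\emph{An algebraic error.} The ``crucial relation'' you write down is false. Its right-hand side telescopes to
\[
-\eta c_{t+1} + \bigl(\eta V(\boldx_{t+\half})-\eta V(\boldx_{t+1})\bigr) + \bigl(\eta V(\boldx_{t+1})+\eta c_{t+1}\bigr) = \eta V(\boldx_{t+\half}),
\]
whereas its left-hand side equals $-\eta c_{t+1}$ by the definition of $c_{t+1}$ (namely $\eta c_{t+1} = \boldx_t - \eta V(\boldx_{t+\half}) + \tfrac{\boldx_1-\boldx_t}{t+1} - \boldx_{t+1}$). These coincide only if $c_{t+1} = -V(\boldx_{t+\half})$, which does not hold in general. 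The correct relation to use is simply
\[
\eta V(\boldx_{t+\half}) + (\boldx_{t+1}-\boldx_t) - \tfrac{\boldx_1-\boldx_t}{t+1} = -\eta c_{t+1},
\]
after which one may decompose $\eta c_{t+1}$ as $(\eta V(\boldx_{t+1})+\eta c_{t+1}) - \eta V(\boldx_{t+1})$ as needed. Relatedly, your assertion that ``the coefficients in front of the anchor term $\tfrac{\boldx_1-\boldx_t}{t+1}$ must cancel'' is misleading: in the version of the identity actually used inside Lemma~\ref{lem:potential}, the anchor does not cancel but is absorbed inside a second complete square $\frac{t(t+1)}{2}\|\frac{\boldx_{t+\half}+\boldx_{t+1}}{2}-\boldx_t+\eta V(\boldx_t)+\eta c_t - \frac{\boldx_1-\boldx_t}{t+1}\|^2$. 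If you repair the crucial relation and explicitly track that second square, the plan should go through, but at present the proposal has a genuine gap: the expansion is asserted rather than executed, and the one explicit algebraic ingredient it offers is incorrect.
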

\begin{proof}
We use {\sc MATLAB} to verify the following inequality, which implies the claim by suitable change of variables. For any vectors $a_0, a_1, a_2, a_3, a_4, b_1, b_2, b_3, b_4, u_2, u_4 \in \R^n$, any real numbers $t \ge 1$ and $q > 0$, if 
\begin{align*}
    a_4 = a_2 - b_3 + \frac{1}{t+1}(a_0 - a_2) - u_4,
\end{align*}
then the following identity holds 
\begin{align*}
    &\frac{t(t+1)}{2}\InParentheses{ \InNorms{a_2 + u_2}^2 + \InNorms{b_2 -b_1}^2} + t \InAngles{b_2 + u_2, a_2 - a_0} \\
    &- \frac{(t+1)(t+2)}{2}\InParentheses{ \InNorms{a_4 + u_4}^2 + \InNorms{b_4 -b_3}^2} + t \InAngles{b_4 + u_4, a_4 - a_0} \\
    &- t(t+1) \InAngles{b_4 - b_2, a_4 - a_2} - \frac{t(t+1)}{4q} \InParentheses{ q \InNorms{a_4 - a_3}^2 - \InNorms{b_4 - b_3}^2} \\
    & - t(t+1) \InAngles{u_4, a_4 - a_2} - \frac{t(t+1)}{2} \InAngles{u_2, a_2 - a_3} - \frac{t(t+1)}{2} \InAngles{u_2, a_2 - a_4} \\
    & - \frac{t(t+1)}{2} \InAngles{a_2 - b_1 + \frac{1}{t+1}(a_0 - a_2) -a_3, a_3 - a_4}\\
    =& \frac{t(t+1)}{2} \InNorms{\frac{a_3-a_4}{2} + b_1 - b_2}^2 \\
    & + \frac{t(t+1)}{2} \InNorms{\frac{a_3+a_4}{2}-a_2+ b_2 + u_2 - \frac{a_0 - a_2}{t+1}}^2 \\
    & + \frac{(1-4q)t-4q}{4q} (t+1) \InNorms{b_3 - b_4}^2 \\
    & + (t+1) \InAngles{b_3 - b_4, b_4 + u_4}.
\end{align*}
The {\sc MATLAB} code for verification of the above identity is available at \url{https://github.com/weiqiangzheng1999/Doubly-Optimal-No-Regret-Learning}. To see how the above identity implies the claimed identity, we replace $a_0$ with $x_1$; replace $a_k$ with $x_{t-1+\frac{k}{2}}$ for $k \in [4]$; replace $b_k$ with $\eta V(x_{t-1+\frac{k}{2}})$ for $k \in [4]$; replace $u_2$ with $\eta c_t$; replace $u_4$ with $\eta c_{t+1}$; and note that by the definition of $c_{t+1}$, we have 
\begin{align*}
    x_{t+1} = x_t - \eta V(x_{t+\half}) + \frac{1}{t+1}(x_1 -x_t) - \eta c_{t+1}. 
\end{align*}
This completes the proof.
\end{proof}

\begin{proposition}[\cite{cai2022accelerated}]
\label{prop:sequence analysis}
Let $\{a_k \in \R^{+}\}_{k\ge 2}$ be a sequence of real numbers. Let $C_1 \ge 0$ and $p \in (0, \frac{1}{3})$ be two real numbers. If for every $k\geq 2$,
    $\frac{k^2}{4} \cdot a_k \le C_1 + \frac{p}{1-p} \cdot \sum_{t=2}^{k-1} a_t$,
then for each $k\geq 2$ we have 
\begin{align*}
    a_k \le \frac{4\cdot C_1}{1-3p} \cdot \frac{1}{k^2}.
\end{align*}
\end{proposition}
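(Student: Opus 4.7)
The plan is to introduce an auxiliary \emph{extremal} sequence that dominates $\{a_k\}$, solve its recurrence in closed form, and then bound the resulting infinite product. Define $\tilde a_2 := C_1$ and, for $k \ge 3$, let $\tilde a_k$ satisfy the recurrence with equality: $\frac{k^2}{4} \tilde a_k = C_1 + \frac{p}{1-p} \sum_{t=2}^{k-1} \tilde a_t$. A short secondary induction gives $a_k \le \tilde a_k$ for every $k \ge 2$ (base case $a_2 \le C_1 = \tilde a_2$ from the hypothesis at $k=2$; the step propagates the bound through the recurrence since its RHS is monotone in the $a_t$'s). So it suffices to prove $\tilde a_k \le \frac{4 C_1}{(1-3p) k^2}$.

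Subtracting the defining equalities for $\tilde a_{k+1}$ and $\tilde a_k$ cancels $C_1$ and yields the two-term recursion $(k+1)^2 \tilde a_{k+1} = (k^2 + c)\, \tilde a_k$ with $c := \tfrac{4p}{1-p}$. Iterating from $4 \tilde a_2 = 4 C_1$ gives the closed form
\[
k^2 \tilde a_k \;=\; 4 C_1 \prod_{t=2}^{k-1}\Bigl(1 + \tfrac{c}{t^2}\Bigr),
\]
reducing the claim to the product bound $\prod_{t=2}^{\infty}(1 + c/t^2) \le \tfrac{1}{1-3p}$.

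Using $1 + x \le e^x$ and the loose bound $\sum_{t=2}^{\infty} 1/t^2 < 2/3$, the infinite product is at most $\exp(2c/3)$, so it remains to verify the elementary scalar inequality $\tfrac{8p}{3(1-p)} \le -\ln(1-3p)$ on $p \in (0, 1/3)$. Both sides vanish at $p=0$, and a derivative comparison of the two sides reduces (after clearing denominators) to $(3p+1)^2 \ge 0$, so the right-hand side dominates throughout the interval.

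The main obstacle is recognizing the need for this extremal-sequence detour. A direct induction with hypothesis $a_t \le \tfrac{4C_1}{(1-3p) t^2}$ only closes when $\sum_{t=2}^{\infty} 1/t^2 \le \tfrac{3(1-p)}{4}$, which forces $p \lesssim 0.14$ and thus fails for the value $p = 1/4$ actually needed in \cref{lem:ls-tangent}. Working through the extremal sequence $\tilde a_k$ and exploiting the product formula for its solution is precisely what extends the argument to the full advertised range $p \in (0, 1/3)$.
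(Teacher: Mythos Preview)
Your proof is correct. The paper itself does not supply an argument for this proposition—it is quoted from \citet{cai2022accelerated} as an auxiliary result without proof—so there is no in-paper proof to compare against. Your extremal-sequence reduction is a clean self-contained route: the domination $a_k\le\tilde a_k$ by induction, the two-term recursion $(k+1)^2\tilde a_{k+1}=(k^2+c)\tilde a_k$, the product formula $k^2\tilde a_k=4C_1\prod_{t=2}^{k-1}(1+c/t^2)$, and the final scalar inequality all check out (the derivative comparison indeed reduces to $(1+3p)^2\ge0$ after clearing denominators). Your closing remark is also on point: the application in \cref{lem:ls-tangent} uses $\frac{p}{1-p}=\frac{1}{3}$, i.e.\ $p=\tfrac14$, which lies outside the range where the naive induction closes, so the detour through $\tilde a_k$ is genuinely necessary for the constants the paper needs.
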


\end{document}